\def\eqref#1{equation~\ref{#1}}
\def\1{\bm{1}}
\DeclareMathAlphabet{\mathsfit}{\encodingdefault}{\sfdefault}{m}{sl}
\SetMathAlphabet{\mathsfit}{bold}{\encodingdefault}{\sfdefault}{bx}{n}
\definecolor{mydarkblue}{rgb}{0,0.08,0.45}
\title{Multi-Task Neural Processes}
\author{Jiayi Shen$^{1}$, Xiantong Zhen$^{1,2}$, Marcel Worring$^{1}$, Ling Shao$^2$\\
$^1$AIM Lab, University of Amsterdam, Netherlands \and
$^2$Inception Institute of Artificial Intelligence, Abu Dhabi, UAE
}
\begin{document}

\maketitle

\begin{abstract}

Neural processes have recently emerged as a class of powerful neural latent variable models that combine the strengths of neural networks and stochastic processes. As they can encode contextual data in the network's function space, they offer a new way to model task relatedness in multi-task learning. 
To study its potential, we develop \textit{multi-task neural processes}, a new variant of neural processes for multi-task learning. 
In particular, we propose to explore transferable knowledge from related tasks in the function space to provide inductive bias for improving each individual task.
To do so, we derive the function priors in a hierarchical Bayesian inference framework, which enables each task to incorporate the shared knowledge provided by related tasks into its context of the prediction function.
Our multi-task neural processes methodologically expand the scope of vanilla neural processes and provide a new way of exploring task relatedness in function spaces for multi-task learning. The proposed multi-task neural processes are capable of learning multiple tasks with limited labeled data and in the presence of domain shift.
We perform extensive experimental evaluations on several benchmarks for the multi-task regression and classification tasks. The results demonstrate the effectiveness of multi-task neural processes in transferring useful knowledge among tasks for multi-task learning and superior performance in multi-task classification and brain image segmentation\footnote{Our code is available soon.}.

\end{abstract}

\section{Introduction}

As deep neural networks are black-box function approximations, it is difficult to introduce prior domain or expert knowledge into a prediction function ~\citep{jakkala2021deep}. 
In contrast, Gaussian processes~\citep{rasmussen2003gaussian} explicitly define distributions over functions and perform inference over these functions given some training examples. This enables reliable and flexible decision-making. However, Gaussian processes can suffer from high computational complexity due to the manipulation of kernel matrices. Therefore, there has been continuous interest in bringing together neural networks and Gaussian processes~\citep{damianou2013deep,wilson2016deep,garnelo2018conditional,jakkala2021deep} into so-called neural processes.

Neural processes~\citep{garnelo2018neural} combine the computational efficiency of neural networks with the uncertainty quantification of stochastic processes. They are a class of neural latent variables model, which deploy a deep neural network to encode context observations into a latent stochastic variable to model prediction functions. Neural processes provide an elegant formalism to efficiently and effectively incorporate multiple datasets into learning distributions over functions. This formalism is also promising in multi-task learning to improve individual tasks by transferring useful contextual knowledge among related tasks. Their capability of estimating uncertainty over predictions also makes them well-suited for multi-task learning with limited data, where each task has only a few training samples. However, neural processes rely on the implicit assumption that the context and target sets are from the same distribution and can be aggregated by a simple average pooling operation~\citep{kim2019attentive,volpp2020bayesian}. This makes it non-trivial to directly apply neural processes to modeling multiple heterogeneous tasks from different domains, where the context data of different tasks are from distinctive distributions~\citep{long2017learning}.

\begin{wrapfigure}{r}{0.5\textwidth}
  \begin{center}
    \includegraphics[width=0.5\textwidth]{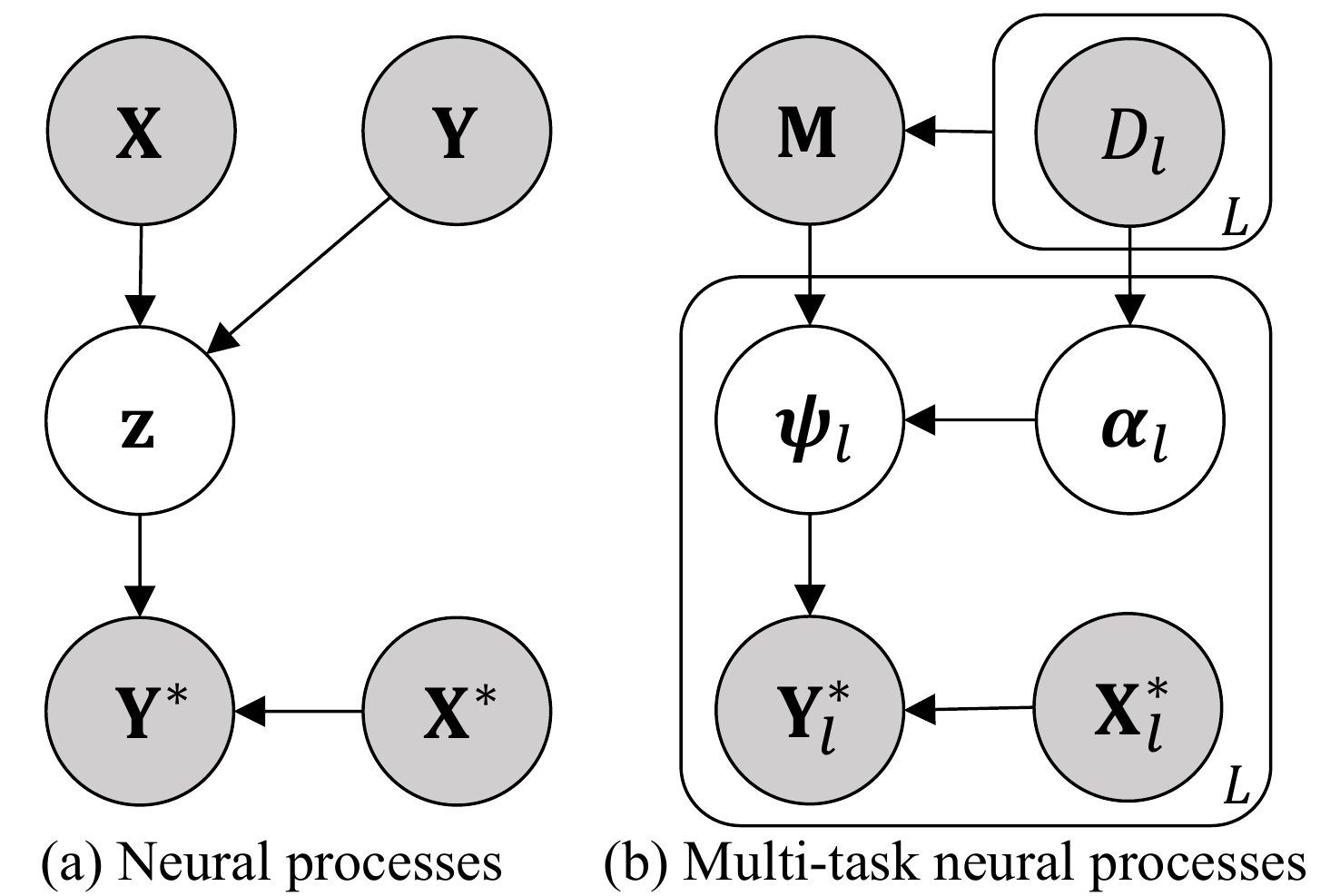}
  \end{center}
  \vspace{-2mm}
  \caption{Graphical illustration of neural processes and multi-task neural processes. Shaded nodes indicate observed variables, and white nodes indicate the introduced latent variables. }
\vspace{-2mm}
\label{MTNP_model}
\end{wrapfigure}

In this paper, we develop \textit{multi-task neural processes} (MTNPs), a methodological extension of neural processes for multi-task learning, which fills the theoretical gap of neural processes for multi-task learning.
Particularly, we propose to explore task relatedness in the function space by specifying the function priors in a hierarchical Bayesian inference framework. 
The shared knowledge from related tasks is incorporated into the context of each individual task, which serves as the inductive bias for making predictions in this task. 
The hierarchical architecture allows us to design expressive data-dependent priors. 
This enables the model to capture the complex task relationships in multi-task learning. By leveraging hierarchical modeling, multi-task neural processes are capable of exploring shared knowledge among related tasks in a principled way by specifying the function prior.

We validate the effectiveness of the proposed multi-task neural processes by extensive experiments in both multi-task classification and regression. The results demonstrate that multi-task neural processes can effectively capture task relatedness in the function space and consistently improve the performance of each individual task, especially in the limited data regime.

\section{Preliminaries: Neural Processes}
In this section, we briefly review vanilla neural processes~\citep{garnelo2018neural} based on which we derive our multi-task neural processes. 

Let a data set be given composed of training samples and corresponding labels. In order to better reflect the desired model behaviour at test time~\citep{garnelo2018neural}, the training data is split into the context set $D = (\mathbf{X}, \mathbf{Y})$ and a target set $D^* = (\mathbf{X}^*, \mathbf{Y}^*)$, where $\mathbf{X}= \{\mathbf{x}_1,\cdots,\mathbf{x}_n\}$ is a subset of the training data, $\mathbf{Y}=\{\mathbf{y}_1,\cdots,\mathbf{y}_n\}$ the corresponding set of labels and $n$ the size of the context set.
To ensure both sets have the same data distribution, the context set is split from the target set.
Now, given the context set $( \mathbf{X}, \mathbf{Y} )$, we would like to estimate a function that can make predict the labels $\mathbf{Y}^*$ for target samples $\mathbf{X}^*$.  

In general, we define a stochastic process by a random function $f: \mathcal{X} \to \mathcal{Y}$. Given the context set, we define the joint distribution over the function values
$\{f(\mathbf{x}_1),\cdots, f(\mathbf{x}_n)\}$, which in Gaussian processes is a multivariate Gaussian distribution parameterized by a kernel function. 
The rationale of neural processes is to extract knowledge from the context set to specify the prior over the prediction function.
Instead of a kernel function, neural processes adopt a deep neural network to define the prior distribution.
Specifically, the model introduces a latent variable $\mathbf{z}$ to account for uncertainty in the predictions of $\mathbf{Y}^{*}$. The observed context set is encoded into the latent variable which is conditioned on the context set, i.e. it follows the prior distribution $p(\mathbf{z}|\mathbf{X}, \mathbf{Y})$.
The latent variable $\mathbf{z}$ is a high-dimensional random vector parameterising the stochastic process by $f(\mathbf{X}^*) = g(\mathbf{X}^*, \mathbf{z})$. The function $g(\cdot, \cdot)$ is an extra fixed and learnable decoder function, which is also implemented by a neural network. Thus, the neural process model can be formulated as follows:
\begin{equation}
\begin{aligned}
p(\mathbf{Y}^*|\mathbf{X}^*, \mathbf{X}, \mathbf{Y}) = \int p(\mathbf{Y}^*|\mathbf{X}^*, \mathbf{z}) p(\mathbf{z}|\mathbf{X}, \mathbf{Y})d\mathbf{z}.
\end{aligned}
\end{equation}
The graphical model for neural processes is shown in Figure~\ref{MTNP_model} (a).

The neural process model is optimized using amortized variational inference. Let $q(\mathbf{z}|\mathbf{X}^* , \mathbf{Y}^*)$ be a variational posterior of the latent variable $\mathbf{z}$. The evidence lower-bound (ELBO) for neural processes is given as follows:
\begin{equation}
\begin{aligned}
\log p(\mathbf{Y}^*|\mathbf{X}^*, \mathbf{X}, \mathbf{Y}) \ge \mathbb{E}_{q(\mathbf{z}|\mathbf{X}^* , \mathbf{Y}^*)}  [p(\mathbf{Y}^*|\mathbf{X}^*, \mathbf{z})] -\mathbb{D}_{\rm{KL}}[q(\mathbf{z}|\mathbf{X}^* , \mathbf{Y}^*)|| p(\mathbf{z}|\mathbf{X}, \mathbf{Y})] .
\end{aligned}
\end{equation}

In neural processes, the function space defined by deep neural networks allows the model to extract deep features while retaining a probabilistic interpretation \citep{jakkala2021deep}.
In multi-task learning, usually different tasks can be from different domains and have their specific data distributions ~\citep{lawrence2004learning, long2017learning}. Due to the complex data structure of multi-task learning, it is not straightforward to explore task relatedness in such function spaces. 
In this paper, we aim to extend the methodology of the neural process to the scenarios of multi-task learning to learn the shared knowledge among tasks for improving individual tasks.  

\section{Multi-Task Neural Processes}
\label{method}
The common setup of multi-task learning is that there are multiple related tasks for which we would like to improve the learning of each individual one by sharing information across the different tasks~\citep{williams2007multi}. Multi-task learning has been studied under different settings~\citep{requeima2019fast,williams2007multi,lawrence2004learning, yu2005learning, long2017learning}. 
In this paper, we tackle the multi-input multi-output setting, where each task has a different distribution while different tasks share the same target space~\citep{lawrence2004learning, yu2005learning, long2017learning, zhang2020deep}. It aims to improve the overall performance of all multiple tasks simultaneously different from the sequential multi-task leaning~\citep{requeima2019fast, garnelo2018conditional}.  This is a challenging scenario due to the domain shift between tasks, which makes it sub-optimal to directly apply neural processes by incorporating the data from other related tasks into the context of each individual task.

\subsection{Hierarchical Context modeling}
Multi-task learning considers the estimation of random functions $f_l$, $l = 1, ..., L$ for each of the $L$ related tasks. Each task $l$ has its own training data, which is split into a context set ${D}_l = ( \mathbf{X}_l, \mathbf{Y}_{l} )$ and a target set ${D}^*_l = (\mathbf{X}^*_l, \mathbf{Y}^*_{l})$. $\mathbf{X}^*_l \in \mathbb{R}^{n^*_l \times d}$ and $\mathbf{X}_l \in \mathbb{R}^{n_l \times d}$ are inputs while $\mathbf{Y}^*_l \in \mathbb{R}^{n^*_l \times C}$ and  $\mathbf{Y}_{l} \in \mathbb{R}^{n_l \times C}$ are outputs. $d$ and $C$ are the sizes of the input space and output space, respectively.
$n_l^*$ and $n_l$ are the sizes of respectively the target and context set for the $l$-th task. Thus, we obtain the task-specific latent variables $\mathbf{f}_l=f_l(\mathbf{X}_l) \in \mathbb{R}^{n_l \times C}$. 
We use $\{D_{l} \}_{l=1}^{L}$ to denote all context sets in the dataset, which for brevity we represent as $\{{D}_l\}$ and likewise will do for other sets.

Formulated this way the goal of multi-task learning is to predict $\{\mathbf{Y}^*_l\}$ for given $\{\mathbf{X}^*_l\}$ simultaneously with the assistance information $\{D_l\}$ from all tasks. To this end, we construct a joint prediction distribution with respect to the latent random functions $\{f_l\}$ as:
\begin{equation}
\begin{aligned}
p(\{\mathbf{Y}^*_{l}\}|\{\mathbf{X}^*_l\}, \{D_l\}) = \prod_{l=1}^{L} p(\mathbf{Y}^*_{l}|\mathbf{X}^*_l, \{D_l\}) = \prod_{l=1}^{L} \int p(\mathbf{Y}^*_{l}|\mathbf{X}^*_l, f_l) p(f_l|\mathbf{M})df_l.
\end{aligned}
\label{model of sp}
\end{equation}
Here, to enable shared knowledge to be transferred among tasks, we introduce a global variable $\mathbf{M}$ which works as a container to collect the useful information from $\{D_l\}$ of all tasks. In contrast to neural processes for single tasks, the global variable $\mathbf{M}$ provides the contextual information from all tasks for each individual task. The concrete formation of $\mathbf{M}$ depends on the learning scenarios. 
For regression tasks, $\mathbf{M} \in \mathbb{R}^{L*d}$ and each row corresponds to one task, which is the average of all feature vectors from a task. For classification tasks, $\mathbf{M} \in \mathbb{R}^{L*C*d}$ where each vector $\mathbf{M}_{l,c}$ is the average of all features of each category from the corresponding task.

Similar to Gaussian processes, we assume that the function value is $\mathbf{Y}^*_{l} = f_l(\mathbf{X}^*_l) + \epsilon$, where $\epsilon \sim  \mathcal{N}(0, \sigma^2)$ is the observation noise. For regression tasks, we can define the predictive likelihood on the target set as $p(\mathbf{Y}^*_{l}|\mathbf{X}^*_l, f_l) = \mathcal{N}(\mathbf{Y}^*_{l}|f_l(\mathbf{X}^*_l), \sigma^2)$. For classification tasks, we use $\log p(\mathbf{Y}^*_{l}|\mathbf{X}^*_l, f_l) = \sum^{n_l^*}_{i=1}{\mathbf{y}_{l, i}^*{\log(f_l(\mathbf{x}_{l, i}^*))}}$ as the log-likelihood function,
where $\mathbf{x}_{l, i}^*$ is the $i$-th target sample from the $l$-th task.

In order to combine Gaussian processes and neural networks in the context of multi-task learning, we define $p(f_l|\mathbf{M})$ in~(\ref{model of sp}) as a deep neural network in place of a Gaussian distribution parameterized by a kernel function. 
To be more specific, we assume that $f_l$ is parameterized by a random variable $\bm{\psi}_l$ by defining $f_l(\mathbf{X}) = \mathbf{X}\bm{\psi}_l^\top$. 
We specify a data dependent prior by conditioning $\bm{\psi}_l$ on the global variable $\mathbf{M}$. In this way, we incorporate the transferable knowledge into the learning of the prediction function of the current tasks.
Thus, the predictive distribution for the $l$-th task over its target set can be formulated as follows:

\begin{equation}
\begin{aligned}
p(\mathbf{Y}^*_{l}|\mathbf{X}^*_l, \{D_l\}) = \int p(\mathbf{Y}^*_{l}|\mathbf{X}^*_l, \bm{\psi}_l) p_{\theta}(\bm{\psi}_l|\mathbf{M}) d\bm{\psi}_l.
\end{aligned}
\label{mtnp}
\end{equation}
In effect, $\bm{\psi}_l$ denotes the parameters of classifier for classification tasks or regressors for regression tasks. Particularly, the introduced latent variable $\bm{\psi}_l \in \mathbb{R}^{C \times d}$ denotes the task specific classifier, where $d$ is the dimension of the input feature and $C$ is the number of classes in the dataset.
As done in~\citep{requeima2019fast}, 
we generate each column of $\bm{\psi}_l$ independently from the context samples of the corresponding class. In our case, each column of $\bm{\psi}_l$ encodes the context information of its class from all tasks $p_{\theta}(\bm{\psi}_l| \mathbf{M}) = \prod_{c=1}^C p_{\theta}(\bm{\psi}_{l, c}| \mathbf{M}_{c})$.

Directly aggregating $\mathbf{M}$ as done in neural processes for single tasks is not applicable for multi-task learning due to the distribution shift between tasks. The data from related tasks should be processed and adapted to the current task as the contextual information. 
To this end, we introduce a higher-level latent variable $\bm{\alpha}_l$ to extract the shared knowledge from $\mathbf{M}$, which is conditioned on the data $D_l$ of each task. This results in a hierarchical Bayesian modeling of functions in the neural process:
\begin{equation}
\begin{aligned}
p_{\theta}(\bm{\psi}_l|\mathbf{M}) &= \int p_{\theta_1}(\bm{\psi}_l| \bm{\alpha}_l, \mathbf{M}) p_{\theta_2}(\bm{\alpha}_l|D_l) d\bm{\alpha}_l, 
\end{aligned}
\label{hp}
\end{equation}
where $\bm{\alpha}_l$ is the latent variable to control the access to shared knowledge for each task, which is used to explore the relevant knowledge to the task $l$. $p_{\theta_1}(\bm{\psi}_l| \bm{\alpha}_l, \mathbf{M})$ and  $p_{\theta_2}(\bm{\alpha}_l|D_l)$ are prior distributions of the latent variable $\bm{\psi}_l$ and $\bm{\alpha}_l$, respectively, which are parameterized with neural networks. 
To be specific, we define $p_{\theta_1}(\bm{\psi}_l| \bm{\alpha}_l, \mathbf{M})=\mathcal{N}(\bm{\psi}_l|\mu(\mathbf{m}_l), \Sigma(\mathbf{m}_l))$. Here $\mathbf{m}_l$ contains the relevant knowledge to the task $l$, which is adapted from the global variable $\mathbf{M}$ by a deterministic function $\mathbf{m}_l = h(\bm{\alpha}_l, \mathbf{M})$, where $h(\cdot)$ is a learnable function implemented with a neural network.

By substituting (\ref{hp}) into (\ref{mtnp}), we obtain the model of multi-task neural processes as follows:
\begin{equation}
    p( \{\mathbf{Y}^*_{l}\}|\{\mathbf{X}^*_l\}, \{D_l\}) = \prod_{l=1}^{L} \int \int 
    p(\mathbf{Y}^*_{l}|\mathbf{X}^*_l, \bm{\psi}_l)
    p_{\theta_1}(\bm{\psi}_l| \bm{\alpha}_l, \mathbf{M}) p_{\theta_2}(\bm{\alpha}_l|D_l) d\bm{\psi}_l d\bm{\alpha}_l.
\end{equation}

The designed hierarchical context modeling provides a principled way to explore task relatedness in the function space, which allows task-specific function variables to leverage the shared knowledge from related tasks. We provide theoretical proof in Appendix to show that the proposed multi-task neural processes are a valid stochastic processes, which completes the theory of multi-task neural processes.
The graphical model of the multi-task neural processes is shown in Figure~\ref{MTNP_model}~(b).

\subsection{Variational Hierarchical Inference}
The previous section developed the model of multi-tasks neural processes with a hierarchical context model. We now describe how to optimize the model and obtain the predictions by leveraging a variational Bayesian inference framework. To that end, based on a conditional independence assumption, we introduce the variational joint posterior distribution factorized as follows:
\begin{equation}
\begin{aligned}
q_{\varphi}(\{\bm{\psi}_l\}, \{\bm{\alpha}_l\}|\{{D}_{l}^*\}) &= \prod_{l=1}^L q_{\varphi}(\bm{\psi}_l, \bm{\alpha}_l|{D}^*_{l}) = \prod_{l=1}^L q_{{\varphi}_1}(\bm{\psi}_l| {D}^*_{l}) q_{{\varphi}_2}(\bm{\alpha}_l| {D}^*_{l}), \\
\end{aligned}
\label{vp}
\end{equation}
where $q_{{\varphi}_1}(\bm{\psi}_l| {D}^*_{l})$ and $q_{{\varphi}_2}(\bm{\alpha}_l| {D}^*_{l})$ are variational posteriors of the latent variables $\bm{\psi}_l$ and ${\bm{\alpha}}_l$ for the task $l$, respectively. Both variational posteriors are parameterized as Gaussian distributions.
${\varphi}_1$ and ${\varphi}_2$ are amortized inference networks to generate variational posteriors and shared by all tasks.
We make use of the amortized variational inference technique \citep{kingma2013auto} to learn such distributions over latent variables.

\paragraph{Learning} By incorporating the variational posteriors into (\ref{vp}), we derive the ELBO for the multi-task neural processes as follows:
\begin{equation}
\begin{aligned}
  \log  p(\{\mathbf{Y}^*_{l}\}|\{\mathbf{X}^*_l \},& \{D_l\}) \ge  \sum_{l=1}^{L} \Big\{  \mathbb{E}_{q_{{\varphi}_2}( \bm{\alpha}_l | {D}^*_l)} \big\{\mathbb{E}_{q_{{\varphi}_1}(\bm{\psi}_l|{D}^*_l)}  [\log p(\mathbf{Y}^*_{l}|\mathbf{X}^*_l , \bm{\psi}_l)] \\
  & -\mathbb{D}_{\rm{KL}}[q_{{\varphi}_1}(\bm{\psi}_l| {D}^*_{l}) || p_{{\theta}_1}(\bm{\psi}_l| \bm{\alpha}_l, \mathbf{M})] \big \}  
   -\mathbb{D}_{\rm{KL}}[q_{{\varphi}_2}(\bm{\alpha}_l| {D}^*_{l})||p_{{\theta}_2}(\bm{\alpha}_l|D_l) ] \Big\}.
\end{aligned}
\end{equation}
The detailed derivation is provided in Appendix~\ref{app1}.
By adopting the Monte Carlo sampling, we obtain the empirical objective for the proposed multi-task neural processes:  
\begin{equation}
\begin{aligned}
  \hat{L}_{\rm{MTNPs}}({\theta},{\varphi}) &= \sum_{l=1}^{L} \Big\{ \frac{1}{N_a} \sum_{i=1}^{N_a} \big\{ \frac{1}{N_f}\sum_{j=1}^{N_f}[-\log p(\mathbf{Y}^*_{l}|\mathbf{X}^*_l , \bm{\psi}_l^{(j)})] \\
   & + \lambda_f \mathbb{D}_{\rm{KL}}[q_{{\varphi}_1}(\bm{\psi}_l| {D}^*_{l}) || p_{{\theta}_1}(\bm{\psi}_l| \bm{\alpha}_l^{(i)}, \mathbf{M})\big \}  
   +\lambda_a \mathbb{D}_{\rm{KL}}[q_{{\varphi}_2}(\bm{\alpha}_l| {D}^*_{l})||p_{{\theta}_2}(\bm{\alpha}_l|D_l) ] \Big\},
\end{aligned}
\end{equation}
where $\bm{\psi}_l^{(j)} \sim q_{{\varphi}_1}(\bm{\psi}_l | {D}^*_l)$ and $\bm{\alpha}_l^{(i)} \sim q_{{\varphi}_2}(\bm{\alpha}_l| {D}^*_l)$. ${N_f}$ and $N_a$ are the number of Monte Carlo samples for the variational posteriors of $\bm{\psi}$ and $\bm{\alpha}$, respectively. ${\lambda_f}$ and $\lambda_a$ are the hyperparameters to help stably train the KL-divergence terms, which are set following the annealing scheme of \citep{bowman2015generating}. In practice, we apply the local reparameterization trick \citep{kingma2015variational} to reduce the variance of stochastic gradients.

\paragraph{Prediction} Having the learned model, we can make prediction on the target set. Given a test sample $\mathbf{x}_l$ from the $l$-th task, we can produce the predictive distribution which involves the prior distributions $p_{{\theta}_1}(\bm{\psi}_l| \bm{\alpha}_l, \mathbf{M})$ and $p_{{\theta}_2}(\bm{\alpha}_l|D_l)$. The predictive distribution with the integration for the introduced latent variables is formulated as:
\begin{equation}
\begin{aligned}
  p(\mathbf{y}_l|\mathbf{x}_l, \{D_l\}) = \int \int p(\mathbf{y}_l|\mathbf{x}_l, \bm{\psi}_l) p_{{\theta}_1}(\bm{\psi}_l| \bm{\alpha}_l, \mathbf{M})   p_{{\theta}_2}(\bm{\alpha}_l|D_l)  d{\bm{\alpha}_l} d{\bm{\psi}_l}
\end{aligned}
\label{prediction}
\end{equation}
Here we also need to apply the Monte Carlo estimation over (\ref{prediction}) and obtain the predictions as follows:
\begin{equation}
\begin{aligned}
  p(\mathbf{y}_l|\mathbf{x}_l) \approx \frac{1}{N_a} \sum_{i=1}^{N_a} \frac{1}{N_f}\sum_{j=1}^{N_f} p(\mathbf{y}_l|\mathbf{x}_l, \bm{\psi}_l^{(j)})],
\end{aligned}
\end{equation}
where $ \bm{\psi}_l^{(j)} \sim p_{{\theta}_1}(\bm{\psi}_l| \bm{\alpha}_l^{(i)}, \mathbf{M})$ and $\bm{\alpha}_l^{(i)} \sim p_{{\theta}_2}(\bm{\alpha}_l|D_l)$. In particularly, the prior distribution of the latent variable $\psi_l$ is generated from the output of the learned function $h(\bm{\alpha}_l^{(i)}, \mathbf{M})$.

\section{Related Work}

Neural network based models have achieved impressive results on various applications~\citep{lecun2015deep}. However, due to the large number of parameters these neural models need large-scale data with annotation. It is challenging to train a deep neural model that generalizes well with number-limited labeled data. To reduce such labeling consumption, recent works \citep{long2017learning, liu2016deep} follow a multi-task learning strategy to fully leverage information from relevant tasks as inductive bias \citep{caruana1997multitask} to improve each task's performance. The main challenge of multi-task learning is that each task is provided with a limited amount of labeled data, which is insufficient to build reliable classifiers without overfitting \citep{long2017learning}.

Multi-task learning (MTL) aims to learn several tasks simultaneously and improve their overall performance. The crux of MTL is how to explore task relatedness from different tasks, which could be particularly significant when limited data for each task is available \citep{long2017learning, zhang2020deep}. Recently, the task relatedness is learned in many different aspects of the model, e.g., loss functions~\citep{liu2020towards,qian2020multi, kendall2018multi}, parameter space~\citep{long2017learning, bakker2003task}, or representation space~\citep{misra2016cross}. In this paper, we focus on the data-insufficient problem for multi-task learning. Different from other branches of MTL models \citep{huang2021age, fu2021multi, phillips2021deep} which leverage several channels of supervision information simultaneously included by the same input, our data setting allows each task to have its own individual data. This is much more challenging due to the distribution shift between inputs of different tasks. 

In previous works \citep{lawrence2004learning,yu2005learning, yousefi2019multi}, multi-task learning benefits from Gaussian Processes by generally incorporating the shared information in the Gaussian Processes prior to synergize several random functions of different tasks simultaneously. The multi-task Gaussian processes \citep{yu2005learning} proposes a hierarchical prior which enables task specific random functions to share some common structure in the hyper prior.
As an alternative to using Gaussian processes for multi-task learning, nowadays deep neural networks have become popular as parameterized functions, which construct an information sharing architecture between tasks. Some methods \citep{misra2016cross,liu2019end} learn expressive combinations of features from different tasks by deep neural networks. Some approaches \citep{gao2020vectornet, sun2019adashare} flexibly adjust the deep architecture for each individual tasks. \cite{yu2020gradient} mitigates gradient interference by performing the proposed gradient surgery. However, these deep multi-task learning methods relies on large amounts of training data and therefore tends to overfit with limited data. 

Recently, neural processes \citep{garnelo2018neural} and the related works \citep{garnelo2018conditional, kim2019attentive, wang2020doubly, requeima2019fast, gordon2019convolutional} combine Gaussian Processes and neural networks, which are not only computationally efficient but also retain a probabilistic interpretation of the model. 
\citep{garnelo2018neural, kim2019attentive, wang2020doubly} introduce the latent representation variables to model the function randomness.
\citep{requeima2019fast, gordon2019convolutional} introduce a latent parameter variable to directly model the predictive distribution, which acts as the parameters of the neural network. 

Hierarchical modeling in the Bayesian framework has been successful
to design the form of the prior~\citep{daume2009bayesian, zhao2017learning, klushyn2019learning, wang2020doubly} and posterior distributions~\citep{ranganath2016hierarchical, krueger2017bayesian, zhen2020learning} based on many observations. It allows the latent variable to follow a complicated distribution and forms a highly flexible approximation~\citep{krueger2017bayesian}. 

\section{Experiments and Results}
\vspace{-2mm}
\label{experiment}

\begin{figure*}[t] 
\centering 
\centerline{\includegraphics[width=\textwidth]{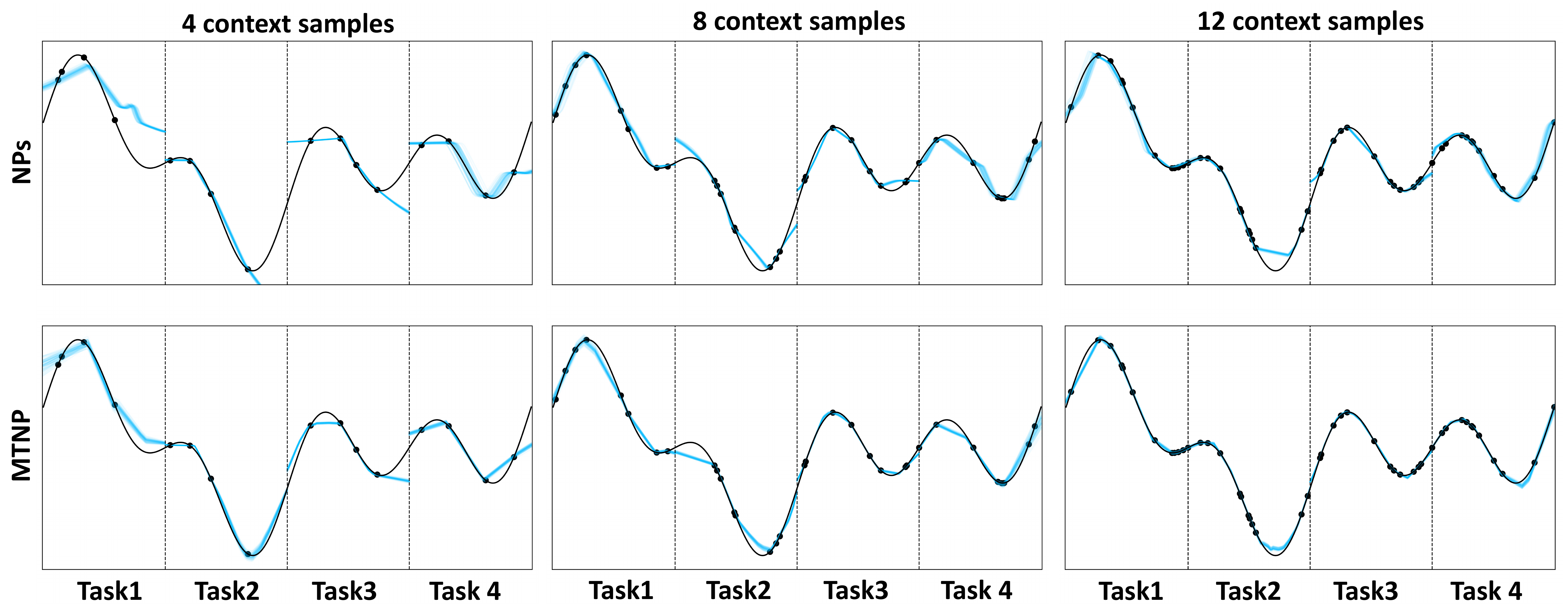}}
\caption{1-D multi-task function regression. 
The plots show sample of curves conditioned on an increasing number of context samples for each task (4, 8 and 12 in each column respectively). The ground truth is shown in black and the context samples as black dots. The predictions of our MTNPs more resemble the ground truth than that of NPs, especially in the boundary of different tasks.}
\label{draw_example-1d}
\end{figure*} 

\subsection{1-D Multi-Task Function Regression} Our method fully utilizes the transferable knowledge provided by related tasks to improve each task's performance. To show this, we test the proposed multi-task neural processes on the 1-D multi-task function regression. 
\paragraph{Setup.} We define several tasks with different data distributions: inputs of each task are sampled from the separated intervals without overlapping, such as $[-2\pi, \pi)$, $[-\pi, 0)$, $[0, \pi)$, and $[\pi, 2\pi)$. Each $x$-value is drawn uniformly at random in its belonging intervals. We assume that all tasks share the same ground truth function to ensure that there is transferable knowledge between them. The ground truth function is characterized as a sum of sine or cosine functions~\citep{wang2020doubly}, such as $y = \sin(x) + \sin(2x) - \cos(0.5x) + \epsilon$, where $\epsilon$ is the noise drawn from $\mathcal{N}(0, 0.0003^2)$. At each training step, the hyperparameters of the ground truth function are fixed as set in ~\citep{kim2019attentive, wang2020doubly}.

\vspace{-4mm}
\paragraph{Results.} As shown in Figure~\ref{draw_example-1d}, we visualize prediction results of different tasks for the comparison of neural processes and multi-task neural processes. The predictions of the proposed multi-task neural processes (the second row) more resemble the ground truth functions than that of neural processes (the first row), especially in the boundary of different tasks. Moreover, we find that when training with fewer context samples, the improvement is more significant. This demonstrates that our method enhances the generalization of each task by fully utilizing the transferable knowledge from related tasks.

\subsection{Multi-task Classification}

\paragraph{Datasets.}
We evaluate the performance of our multi-task neural processes on real-world multi-task classification, where different tasks are defined as image classification problems in different domains. The tasks are related to each other since they share the same label space. 
\texttt{Office-Home} \citep{venkateswara2017Deep} contains images from four domains/tasks: Artistic (A), Clipart (C), Product (P) and Real-world (R). Each task contains images from $65$ categories collected under office and home settings. There are about $15,500$ images in total.
\texttt{Office-Caltech} \citep{gong2012geodesic} contains the ten categories shared between Office-31 \citep{saenko2010adapting} and Caltech-256 \citep{griffin2007caltech}. One task uses data from Caltech-256 (C), and the other three tasks use data from Office-31, whose images were collected from three distinct domains/tasks, namely Amazon (A), Webcam (W) and DSLR (D). There are $8 \sim 151$ samples per category per task, and $2,533$ images in total.
\texttt{ImageCLEF} \citep{long2017learning}, the benchmark for the ImageCLEF domain adaptation challenge, contains $12$ common categories shared by four public datasets/tasks: Caltech-256 (C), ImageNet ILSVRC 2012 (I), Pascal VOC 2012 (P), and Bing (B). There are $2,400$ images in total.

\paragraph{Setup.}
We adopt the standard evaluation protocols~\citep{zhang2021survey} for multi-task classification datasets.
We randomly select $5\%$, $10\%$ and $20\%$ labeled data for training, which correspond to about 3, 6 and 12 samples per category per task, respectively.
In this case, each task has a limited amount of training data, which is insufficient for building the reliable classifier without overfitting. 
For all three benchmarks, we extract the input features by the pretrained VGGnet as \citep{long2017learning}. 
The architectures of inference networks used in our model are provided in Appendix~\ref{app2}.
All the results with error bars are obtained based on a $95$\% confidence interval from five runs. 

\paragraph{Compared methods.}
To show the effectiveness of the proposed multi-tasks neural processes, we conduct a thorough comparison implementing multiple different baseline models. Single task learning (STL) is implemented by task-specific feature extractors and classifiers without knowledge sharing among tasks. Basic multi-task learning (BMTL) shares feature extractors and adds task specific classifiers. We also define variational extensions of the single task learning (VSTL) and basic multi-task learning (VBMTL), which cast models as variational Bayesian problems and treat classifiers as latent variables~\citep{shen2021variational}.
We implement neural processes (NPs) and its variant, neural processes with all task context for comparison.
NPs with all task context is a straightforward extension of NPs for MTL, which treats context sets from all tasks equally without hierarchical context modeling. For a fair comparison, all the above-mentioned methods share the same architecture of the feature extractor. 

\begin{wraptable}{r}{0.5\linewidth}
\small
\vspace{-4mm}
\caption{Performance comparison (average accuracy) on \texttt{Office-Home} training with $5\%$, $10\%$, $20\%$ labeled data. The higher the better.}
\label{table_classification}
\centering
\begin{adjustbox}{max width =0.5\textwidth}
\begin{tabular}{l|ccc}
\toprule
{Methods}  &5\% &10\% &20\% \\
\midrule
STL
&49.2{\scriptsize$\pm$0.2} &58.3{\scriptsize$\pm$0.1} &64.9{\scriptsize$\pm$0.1} 
\\
VSTL
&51.1{\scriptsize$\pm$0.1} &60.2{\scriptsize$\pm$0.2} &65.8{\scriptsize$\pm$0.2} 
\\
BMTL
&50.4{\scriptsize$\pm$0.1} &59.5{\scriptsize$\pm$0.1} &65.6{\scriptsize$\pm$0.1} 
\\
VBMTL
&51.3{\scriptsize$\pm$0.1} &60.9{\scriptsize$\pm$0.1} &67.0{\scriptsize$\pm$0.2} 
\\
NPs
&54.7{\scriptsize$\pm$0.1} 
&59.4{\scriptsize$\pm$0.2} 
&{69.3{\scriptsize$\pm$0.2}}
\\
NPs with all task context
&{59.1{\scriptsize$\pm$0.2}}
&{62.4{\scriptsize$\pm$0.1}}
&69.1{\scriptsize$\pm$0.1}
\\
MTNPs  
&\textbf{60.0{\scriptsize$\pm$0.1}}
&\textbf{63.3{\scriptsize$\pm$0.1}}
&\textbf{69.9{\scriptsize$\pm$0.3}}
\\
\bottomrule
\end{tabular}
\end{adjustbox}
\vspace{-4mm}
\end{wraptable}

\begin{table*}[t]
\begin{center}
\begin{minipage}[!t]{\linewidth}
\caption{Performance comparison of different methods on the \texttt{Office-Home} dataset.
}
\vspace{-3mm}
\label{officehome_acc}
\centering
\begin{adjustbox}{max width =\textwidth}
\begin{tabular}{l|cccc>{\columncolor[gray]{.9}}c|cccc>{\columncolor[gray]{.9}}c}
\toprule
\multirow{2}{*}{Methods} & \multicolumn{5}{c|}{5\%}         & \multicolumn{5}{c}{10\%} \\
\cline{2-11} 
 & A    & C    & P    & R    & Avg.  & A    & C    & P    & R    & Avg.  \\ 
\midrule
STL
&36.7{\scriptsize$\pm$0.4}  &30.8{\scriptsize$\pm$0.5}   &67.5{\scriptsize$\pm$0.3}   &61.7{\scriptsize$\pm$0.3}  &49.2{\scriptsize$\pm$0.2}
&50.4{\scriptsize$\pm$0.3}  &40.8{\scriptsize$\pm$0.3}   &74.4{\scriptsize$\pm$0.4}   &67.5{\scriptsize$\pm$0.4}  &58.3{\scriptsize$\pm$0.1}
\\
\cite{bakker2003task} 
&40.0{\scriptsize$\pm$0.1}	&33.6{\scriptsize$\pm$0.3}	&69.8{\scriptsize$\pm$0.4}	&63.6{\scriptsize$\pm$0.3}	&52.8{\scriptsize$\pm$0.1}
&52.5{\scriptsize$\pm$0.3}	&42.3{\scriptsize$\pm$0.4}	&75.7{\scriptsize$\pm$0.5}	&69.5{\scriptsize$\pm$0.5}	&60.0{\scriptsize$\pm$0.2}
\\
\cite{long2017learning}   
&47.8{\scriptsize$\pm$0.4}  &37.9{\scriptsize$\pm$0.2}   &73.6{\scriptsize$\pm$0.3}   &70.4{\scriptsize$\pm$0.2}   &{57.4{\scriptsize$\pm$0.1}}
&57.2{\scriptsize$\pm$0.3}  &43.3{\scriptsize$\pm$0.1}   &78.7{\scriptsize$\pm$0.3}   &74.4{\scriptsize$\pm$0.1}   &\textbf{63.4{\scriptsize$\pm$0.2}} 
\\
\cite{kendall2018multi} 
&40.2{\scriptsize$\pm$0.2}	&33.6{\scriptsize$\pm$0.4}	&69.5{\scriptsize$\pm$0.2}	&63.7{\scriptsize$\pm$0.1}	&51.8{\scriptsize$\pm$0.1}
&49.1{\scriptsize$\pm$0.1}	&38.7{\scriptsize$\pm$0.3}	&73.4{\scriptsize$\pm$0.2}	&67.4{\scriptsize$\pm$0.3}	&57.2{\scriptsize$\pm$0.2}
\\
\cite{guo2020learning} 
&25.8{\scriptsize$\pm$1.4} 
&26.7{\scriptsize$\pm$0.8}
&55.8{\scriptsize$\pm$0.7}
&46.0{\scriptsize$\pm$0.6} 	
&38.3{\scriptsize$\pm$0.5} 
&38.3{\scriptsize$\pm$0.9} 	
&41.5{\scriptsize$\pm$0.8}	
&67.6{\scriptsize$\pm$0.4} 	
&58.8{\scriptsize$\pm$0.1}	
&51.5{\scriptsize$\pm$0.3}
\\
\cite{qian2020multi}
&37.9{\scriptsize$\pm$0.3}	&31.4{\scriptsize$\pm$0.2}	&67.7{\scriptsize$\pm$0.3}	&62.4{\scriptsize$\pm$0.2}	&49.9{\scriptsize$\pm$0.2}
&47.1{\scriptsize$\pm$0.2}	&37.2{\scriptsize$\pm$0.1}	&70.5{\scriptsize$\pm$0.2}	&66.3{\scriptsize$\pm$0.3}	&55.3{\scriptsize$\pm$0.1}
\\
MTNPs
&55.0{\scriptsize$\pm$0.2} &40.8{\scriptsize$\pm$0.3} &74.2{\scriptsize$\pm$0.2} 
&69.9{\scriptsize$\pm$0.2} &\textbf{60.0{\scriptsize$\pm$0.1}}
&59.5{\scriptsize$\pm$0.3} &44.4{\scriptsize$\pm$0.5} &77.3{\scriptsize$\pm$0.3} 
&72.0{\scriptsize$\pm$0.3} &{{63.3{\scriptsize$\pm$0.1}}}
\\
\bottomrule 
\end{tabular}
\end{adjustbox}
\end{minipage}
\begin{minipage}[!t]{\linewidth}
\caption{Performance comparison of different methods on the \texttt{Office-Caltech} dataset.
}
\vspace{-3mm}
\label{office_acc}
\centering
\begin{adjustbox}{max width=\linewidth}
\begin{tabular}{l|cccc>{\columncolor[gray]{.9}}c|cccc>{\columncolor[gray]{.9}}c}
\toprule
\multirow{2}{*}{Methods} & \multicolumn{5}{c|}{$5\%$}         & \multicolumn{5}{c}{10\%}  \\ 
\cline{2-11} 
& A & W & D & C & Avg. & A & W & D & C & Avg. \\ 
\midrule
STL
& 87.4{\scriptsize$\pm$0.4} & 87.9{\scriptsize$\pm$0.3} & 96.4{\scriptsize$\pm$0.5}  & 82.8{\scriptsize$\pm$0.2} & 88.6{\scriptsize$\pm$0.3}
& 92.8{\scriptsize$\pm$0.5} & 97.7{\scriptsize$\pm$0.3} & 87.8{\scriptsize$\pm$0.2}  & 84.3{\scriptsize$\pm$0.4} & 90.7{\scriptsize$\pm$0.2} 
\\
\cite{bakker2003task} 
&93.2{\scriptsize$\pm$0.2}	&94.0{\scriptsize$\pm$0.3}   &94.7{\scriptsize$\pm$0.3}	&85.4{\scriptsize$\pm$0.4}	&91.8{\scriptsize$\pm$0.1}
&94.9{\scriptsize$\pm$0.4}	&97.6{\scriptsize$\pm$0.5}	 &96.6{\scriptsize$\pm$0.5}	&90.9{\scriptsize$\pm$0.4}	&{95.0{\scriptsize$\pm$0.2}}
\\
\cite{long2017learning}
&92.7{\scriptsize$\pm$0.2} &94.3{\scriptsize$\pm$0.2} &97.1{\scriptsize$\pm$0.2} &89.2{\scriptsize$\pm$0.6} &{93.4{\scriptsize$\pm$0.2}} 
&95.0{\scriptsize$\pm$0.3} &98.1{\scriptsize$\pm$0.4} &95.0{\scriptsize$\pm$0.5} &91.3{\scriptsize$\pm$0.2} &94.8{\scriptsize$\pm$0.3}  
\\
\cite{kendall2018multi} 
&93.6{\scriptsize$\pm$0.4}	&92.5{\scriptsize$\pm$0.2}	&95.0{\scriptsize$\pm$0.5}	&83.9{\scriptsize$\pm$0.5}	&91.2{\scriptsize$\pm$0.3}
&94.9{\scriptsize$\pm$0.5}	&96.2{\scriptsize$\pm$0.4}	&93.6{\scriptsize$\pm$0.3}	&90.4{\scriptsize$\pm$0.2}	&93.8{\scriptsize$\pm$0.2}
\\
\cite{guo2020learning} 
&73.9{\scriptsize$\pm$2.5} 	
&76.0{\scriptsize$\pm$3.6}  	&78.3{\scriptsize$\pm$1.2}	&70.3{\scriptsize$\pm$0.9} 	    &74.6{\scriptsize$\pm$0.9}    &80.4{\scriptsize$\pm$1.8}  	&89.4{\scriptsize$\pm$2.9}  	&73.4{\scriptsize$\pm$4.4}  	&78.5{\scriptsize$\pm$1.9}  	&80.4{\scriptsize$\pm$1.2}  
\\
\cite{qian2020multi} 
&92.6{\scriptsize$\pm$0.3}	&90.9{\scriptsize$\pm$0.2}	&95.7{\scriptsize$\pm$0.4}	&85.2{\scriptsize$\pm$0.6}	&91.1{\scriptsize$\pm$0.3}
&94.2{\scriptsize$\pm$0.4}	&97.0{\scriptsize$\pm$0.4}	&95.0{\scriptsize$\pm$0.3}	&90.2{\scriptsize$\pm$0.3}	&94.1{\scriptsize$\pm$0.3}
\\
MTNPs
&94.6{\scriptsize$\pm$0.1}	&95.8{\scriptsize$\pm$0.2}	&97.9{\scriptsize$\pm$0.0}	
&90.2{\scriptsize$\pm$0.1}	&\textbf{94.6{\scriptsize$\pm$0.1}}
&95.1{\scriptsize$\pm$0.1}	&97.7{\scriptsize$\pm$0.1}	&97.1{\scriptsize$\pm$0.3}	
&91.6{\scriptsize$\pm$0.3}	&\textbf{95.4{\scriptsize$\pm$0.1}}
\\
\bottomrule
\end{tabular}
\end{adjustbox}
\end{minipage}
\begin{minipage}[!t]{\linewidth}
\caption{Performance comparison of different methods on the \texttt{ImageCLEF} dataset.
}
\vspace{-3mm}
\label{clef_acc}
\centering
\begin{adjustbox}{max width=\linewidth}
\begin{tabular}{l|cccc>{\columncolor[gray]{.9}}c|cccc>{\columncolor[gray]{.9}}c}
\toprule
\multirow{2}{*}{Methods}  & \multicolumn{5}{c|}{5\%}         & \multicolumn{5}{c}{10\%} \\ 
\cline{2-11} 
&C & I & P & B & Avg. & C & I & P & B &Avg.\\ 
\midrule
STL
& 85.4{\scriptsize$\pm$0.6} & 71.4{\scriptsize$\pm$0.4} & 57.7{\scriptsize$\pm$0.2} & 36.0{\scriptsize$\pm$0.2} & 62.6{\scriptsize$\pm$0.2} 
& 88.9{\scriptsize$\pm$0.5} & 77.8{\scriptsize$\pm$0.3} & 64.3{\scriptsize$\pm$0.2} & 47.6{\scriptsize$\pm$0.5} & 69.7{\scriptsize$\pm$0.3} 
\\
\cite{bakker2003task} 
&90.9{\scriptsize$\pm$0.4}	&85.4{\scriptsize$\pm$0.6}	&68.1{\scriptsize$\pm$0.3}	&51.4{\scriptsize$\pm$0.5}	&73.9{\scriptsize$\pm$0.3}
&91.0{\scriptsize$\pm$0.5}	&87.1{\scriptsize$\pm$0.3}	&73.4{\scriptsize$\pm$0.4}	&54.5{\scriptsize$\pm$0.2}	&76.5{\scriptsize$\pm$0.4}
\\
\cite{long2017learning}
&90.1{\scriptsize$\pm$0.5} &76.5{\scriptsize$\pm$0.5} &72.8{\scriptsize$\pm$0.3} &54.9{\scriptsize$\pm$0.4}  &73.7{\scriptsize$\pm$0.4} 
&93.3{\scriptsize$\pm$0.4} &83.2{\scriptsize$\pm$0.6} &70.4{\scriptsize$\pm$0.4} &56.3{\scriptsize$\pm$0.4}  &75.8{\scriptsize$\pm$0.2} 
\\
\cite{kendall2018multi} 
&93.2{\scriptsize$\pm$0.6}	&86.1{\scriptsize$\pm$0.4}	&68.6{\scriptsize$\pm$0.3}	&50.4{\scriptsize$\pm$0.4}	&{74.6{\scriptsize$\pm$0.2}}
&91.9{\scriptsize$\pm$0.3}	&88.9{\scriptsize$\pm$0.5}	&74.3{\scriptsize$\pm$0.3}	&52.4{\scriptsize$\pm$0.2}	&76.9{\scriptsize$\pm$0.3}
\\
\cite{guo2020learning} 
&80.1{\scriptsize$\pm$2.9} 	&55.5{\scriptsize$\pm$1.2}	&46.7{\scriptsize$\pm$1.1} 	&24.4{\scriptsize$\pm$1.3} 	&51.7{\scriptsize$\pm$0.9} 
&86.1{\scriptsize$\pm$1.6} 	&68.9{\scriptsize$\pm$2.3} 	&56.0{\scriptsize$\pm$1.5} 	&39.3{\scriptsize$\pm$2.7} 	&62.6{\scriptsize$\pm$0.8} 
\\
\cite{qian2020multi} 
&91.6{\scriptsize$\pm$0.3}	&85.8{\scriptsize$\pm$0.4}	&68.4{\scriptsize$\pm$0.3}	&50.2{\scriptsize$\pm$0.4}	&74.0{\scriptsize$\pm$0.4}
&90.7{\scriptsize$\pm$0.4}	&88.1{\scriptsize$\pm$0.6}	&75.6{\scriptsize$\pm$0.4}	&54.6{\scriptsize$\pm$0.3}	&77.3{\scriptsize$\pm$0.3}
\\
MTNPs 
&90.5{\scriptsize$\pm$0.3}	&84.9{\scriptsize$\pm$0.2}	&70.2{\scriptsize$\pm$0.2}	&58.9{\scriptsize$\pm$0.4}	&\textbf{76.1{\scriptsize$\pm$0.1}}
&93.5{\scriptsize$\pm$0.4}	&88.5{\scriptsize$\pm$0.3}	&74.6{\scriptsize$\pm$0.4}	&61.7{\scriptsize$\pm$0.3}	&\textbf{79.6{\scriptsize$\pm$0.1}}
\\
\bottomrule
\end{tabular}
\end{adjustbox}
\vspace{-4mm}
\end{minipage}
\end{center}
\end{table*}

\vspace{-2mm}
\paragraph{Results.}
We provide comprehensive comparisons on $\texttt{Office-Home}$ in Table~\ref{table_classification}, which is a more challenging multi-task classification dataset with 65 categories. The results show that our MTNPs outperform other counterpart methods. 
NPs with all task context performs better than NPs by a large margin when $5\%$ labeled data is available, showing the benefit of exploring shared knowledge from related tasks with limited data. Noticeably compared to NPs with all task context, the proposed multi-task neural processes benefit from hierarchical context modeling and show even better performance.

More comparison results on the \texttt{Office-Home}, \texttt{Office-Caltech}, \texttt{ImageCLEF} datasets are shown in Tables~\ref{officehome_acc},~\ref{office_acc} and~\ref{clef_acc}, respectively. The average accuracy of all tasks is used for overall performance measurement. The best results are marked in bold. Our MTNPs achieve competitive and even better performance on such multi-task classification datasets with different train-test splits. 
Compared with Bayesian baselines, including VSTL, VBMTL, NPs and \citep{bakker2003task},  our MTNPs directly infer the parameters of prediction functions rather than the input representation, which is able to model a broader range of functional distribution. Moreover, in function space the hierarchical context modeling can better explore the task relatedness, which enable the models to capture the relevant knowledge even in presence of distribution shift among tasks. Experimental results on all three benchmarks with $20\%$ labeled data are provided in Appendix~\ref{app3}.

\vspace{-4mm}
\begin{table*}[t]
\begin{center}
\caption{Performance of multi-task regression (normalized mean squared errors) for rotation angle estimation. The lower the better.}
\label{table_mnist}
\vspace{-3mm}
\centering
\begin{adjustbox}{max width =\textwidth}
\begin{tabular}{l|ccccccccccc>{\columncolor[gray]{.9}}c}
\toprule
Methods 
&\begin{minipage}[b]{0.04\columnwidth}
		\centering
		\raisebox{-.5\height}{\includegraphics[width=\linewidth]{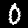}}
	\end{minipage}
&\begin{minipage}[b]{0.04\columnwidth}
		\centering
		\raisebox{-.5\height}{\includegraphics[width=\linewidth]{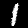}}
	\end{minipage}
&\begin{minipage}[b]{0.04\columnwidth}
		\centering
		\raisebox{-.5\height}{\includegraphics[width=\linewidth]{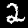}}
	\end{minipage} 
&\begin{minipage}[b]{0.04\columnwidth}
		\centering
		\raisebox{-.5\height}{\includegraphics[width=\linewidth]{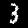}}
	\end{minipage}
&\begin{minipage}[b]{0.04\columnwidth}
		\centering
		\raisebox{-.5\height}{\includegraphics[width=\linewidth]{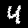}}
	\end{minipage}
&\begin{minipage}[b]{0.04\columnwidth}
		\centering
		\raisebox{-.5\height}{\includegraphics[width=\linewidth]{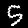}}
	\end{minipage}
&\begin{minipage}[b]{0.04\columnwidth}
		\centering
		\raisebox{-.5\height}{\includegraphics[width=\linewidth]{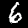}}
	\end{minipage} 
&\begin{minipage}[b]{0.04\columnwidth}
		\centering
		\raisebox{-.5\height}{\includegraphics[width=\linewidth]{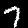}}
	\end{minipage} 
&\begin{minipage}[b]{0.04\columnwidth}
		\centering
		\raisebox{-.5\height}{\includegraphics[width=\linewidth]{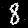}}
	\end{minipage} 
&\begin{minipage}[b]{0.04\columnwidth}
		\centering
		\raisebox{-.5\height}{\includegraphics[width=\linewidth]{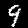}}
	\end{minipage} &\cellcolor[HTML]{E6E6E6} Avg.\\
\midrule
STL
&.138{\scriptsize$\pm$.003}  	&.158{\scriptsize$\pm$.003}	    &.245{\scriptsize$\pm$.015} 	&.216{\scriptsize$\pm$.023} 	&.327{\scriptsize$\pm$.019} 	&.150{\scriptsize$\pm$.002} 	&.229{\scriptsize$\pm$.010} 	&.286{\scriptsize$\pm$.032} 	&.209{\scriptsize$\pm$.018} 	&.191{\scriptsize$\pm$.006} 	&\cellcolor[HTML]{E6E6E6}.215{\scriptsize$\pm$.001} \\
VSTL
&.144{\scriptsize$\pm$.002} 	&.161{\scriptsize$\pm$.012} 	&.330{\scriptsize$\pm$.018} 	&.197{\scriptsize$\pm$.006} 	&.382{\scriptsize$\pm$.028} 	&.208{\scriptsize$\pm$.016}	
&.189{\scriptsize$\pm$.002} 	&.296{\scriptsize$\pm$.024} 	&.135{\scriptsize$\pm$.004} 	&.194{\scriptsize$\pm$.003} 	&\cellcolor[HTML]{E6E6E6}.224{\scriptsize$\pm$.004} \\
BMTL
&.114{\scriptsize$\pm$.004}     &.124{\scriptsize$\pm$.005}  	&.115{\scriptsize$\pm$.004}  	&.124{\scriptsize$\pm$.006} 	&.115{\scriptsize$\pm$.004}  	&.114{\scriptsize$\pm$.004}  	&.114{\scriptsize$\pm$.003}  	&.125{\scriptsize$\pm$.005}  	&.115{\scriptsize$\pm$.004}  	&.115{\scriptsize$\pm$.004}  	&\cellcolor[HTML]{E6E6E6}.118{\scriptsize$\pm$.003}  \\
VBMTL
&.121{\scriptsize$\pm$.003} 	&.124{\scriptsize$\pm$.005} 	&.121{\scriptsize$\pm$.003} 	&.123{\scriptsize$\pm$.005} 	&.121{\scriptsize$\pm$.003}     &.121{\scriptsize$\pm$.003} 	
&.121{\scriptsize$\pm$.003} 	&.124{\scriptsize$\pm$.005} 	&.121{\scriptsize$\pm$.003} 	&.121{\scriptsize$\pm$.003} 	&\cellcolor[HTML]{E6E6E6}.121{\scriptsize$\pm$.003} \\
\cite{yu2005learning}
&.171{\scriptsize$\pm$.004} 	&.154{\scriptsize$\pm$.002}	&.145{\scriptsize$\pm$.001} 	&.126{\scriptsize$\pm$.002}	&.168{\scriptsize$\pm$.002} 	&.163{\scriptsize$\pm$.001} 	&.224{\scriptsize$\pm$.003} 	&.145{\scriptsize$\pm$.002} 	
&.113{\scriptsize$\pm$.002} 	
&.122{\scriptsize$\pm$.002} 		&\cellcolor[HTML]{E6E6E6}.153{\scriptsize$\pm$.000} 
\\
\cite{liu2019end}
&.196{\scriptsize$\pm$.020}	&.096{\scriptsize$\pm$.110} 	&.162{\scriptsize$\pm$.032} 	&.124{\scriptsize$\pm$.015} 	&.152{\scriptsize$\pm$.049} 	&.140{\scriptsize$\pm$.025} 	&.249{\scriptsize$\pm$.045} 	&.195{\scriptsize$\pm$.016} 	&.081{\scriptsize$\pm$.018} 	&.119{\scriptsize$\pm$.021} 		&\cellcolor[HTML]{E6E6E6}.152{\scriptsize$\pm$.018} 
\\
\cite{guo2020learning}
&.158{\scriptsize$\pm$.002} 	&.078{\scriptsize$\pm$.004} 	&.103{\scriptsize$\pm$.004} 	&.063{\scriptsize$\pm$.003}	&.118{\scriptsize$\pm$.008} 	&.099{\scriptsize$\pm$.004}	&.156{\scriptsize$\pm$.004} 	&.090{\scriptsize$\pm$.006}	
&.082{\scriptsize$\pm$.004} 	&.138{\scriptsize$\pm$.009} 	&\cellcolor[HTML]{E6E6E6}.109{\scriptsize$\pm$.002}
\\
NPs
&.193{\scriptsize$\pm$.001} &.058{\scriptsize$\pm$.003} 
&.105{\scriptsize$\pm$.003} &.067{\scriptsize$\pm$.004} 
&.101{\scriptsize$\pm$.002} &.120{\scriptsize$\pm$.003}
&.158{\scriptsize$\pm$.004} &.107{\scriptsize$\pm$.003}
&.083{\scriptsize$\pm$.005} &.126{\scriptsize$\pm$.003} 
&\cellcolor[HTML]{E6E6E6}.112{\scriptsize$\pm$.003}\\
NPs with all task context
&.188{\scriptsize$\pm$.002} &.064{\scriptsize$\pm$.003} 
&.114{\scriptsize$\pm$.003} &.063{\scriptsize$\pm$.005} 
&.103{\scriptsize$\pm$.001} &.116{\scriptsize$\pm$.003} 
&.167{\scriptsize$\pm$.003} &.095{\scriptsize$\pm$.004} 
&.067{\scriptsize$\pm$.001} &.111{\scriptsize$\pm$.002} 
&\cellcolor[HTML]{E6E6E6}.109{\scriptsize$\pm$.002} \\
MTNPs
&.183{\scriptsize$\pm$.002} 	&.060{\scriptsize$\pm$.003} 	&.098{\scriptsize$\pm$.002} 	&.067{\scriptsize$\pm$.001} 	&.109{\scriptsize$\pm$.003} 	&.109{\scriptsize$\pm$.002} 	&.160{\scriptsize$\pm$.004} 	&.092{\scriptsize$\pm$.002} 	&.077{\scriptsize$\pm$.003} 	&.113{\scriptsize$\pm$.004}  
&\cellcolor[HTML]{E6E6E6}\textbf{.106}{\scriptsize$\pm$.001} \\
\bottomrule
\end{tabular}
\end{adjustbox}
\end{center}
\vspace{-6mm}
\end{table*}

\subsection{Multi-Task Regression}

\textbf{Setup.}
In order to show the effectiveness of MTNPs for multi-task regression, we conduct experiments on the $\texttt{Rotated MNIST}$ dataset~\citep{lecun1998gradient}. We adopt this dataset to study multi-task regression, where each task is an angle estimation problem for each digit and different tasks corresponding to different digits are related because they share the same rotation angle space. Each image is rotated by $0^\circ$ through $90^\circ$ in intervals of $10^\circ$, where the rotation angle is the regression target. We randomly choose $0.1\%$  samples per task per angle as the training set.

\textbf{Results.} 
Since we would like to improve the overall performance of all regression tasks, we use the average of normalized mean squared errors of all tasks as the measurement. As shown in Table~\ref{table_mnist}, our MTNPs outperform other counterpart methods by yielding an overall lower mean error.

\subsection{Brain Image Segmentation}
In this section, we demonstrate that multi-task neural processes are also able to explore spatial context information to improve image segmentation. To this end, we adopt a brain image dataset~\citep{buda2019association} with lower-grade gliomas collected from $110$ patients. The number of images varies among patients from $20$ to $88$. The goal is to segment the tumor in each brain image.

\paragraph{Setup.}
To apply our multi-task neural processes,we reformulate the segmentation task as a pixel-wise regression problem, where each pixel corresponds to a regression task to predict the probability of this pixel belonging to the tumor. In doing so, the spatial correlation and dependency among pixels are effectively modeled by capturing the task relatedness.
To be specific, we consider the prediction of each pixel to be a regression task. For the task $l$, we define $\Omega_l$ as a local region centered at the spatial position $l$, which provides the local context information. 
In this case, the region centered at the pixel provides the local context information.
Each task incorporates the shared knowledge provided by related tasks into its context of the prediction function. This offers an effective way to model the long-range interdependence of pixels in one image. 
For implementation, we use the U-Net architecture~\citep{ronneberger2015u} as the backbone and add our model as the final layer. 

\paragraph{Results.} We compare our method and U-Net on the brain segmentation dataset. The results show that the proposed multi-task neural processes surpass the baseline U-Net by 0.5\% in terms of dice similarity coefficients (DSC) for the overall validation set. Figure~\ref{draw_brain} shows segmentation results of the proposed multi-task neural processes (bottom row) and the U-Net (upper row), where the green outline corresponds to the ground truth and the red to the segmentation output. Our multi-task neural processes predict contours closer to the ground truth. This demonstrates the advantages of exploring context information by multi-task neural processes for segmentation.

\begin{figure*}[t] 
\centering 
\centerline{\includegraphics[width=\textwidth]{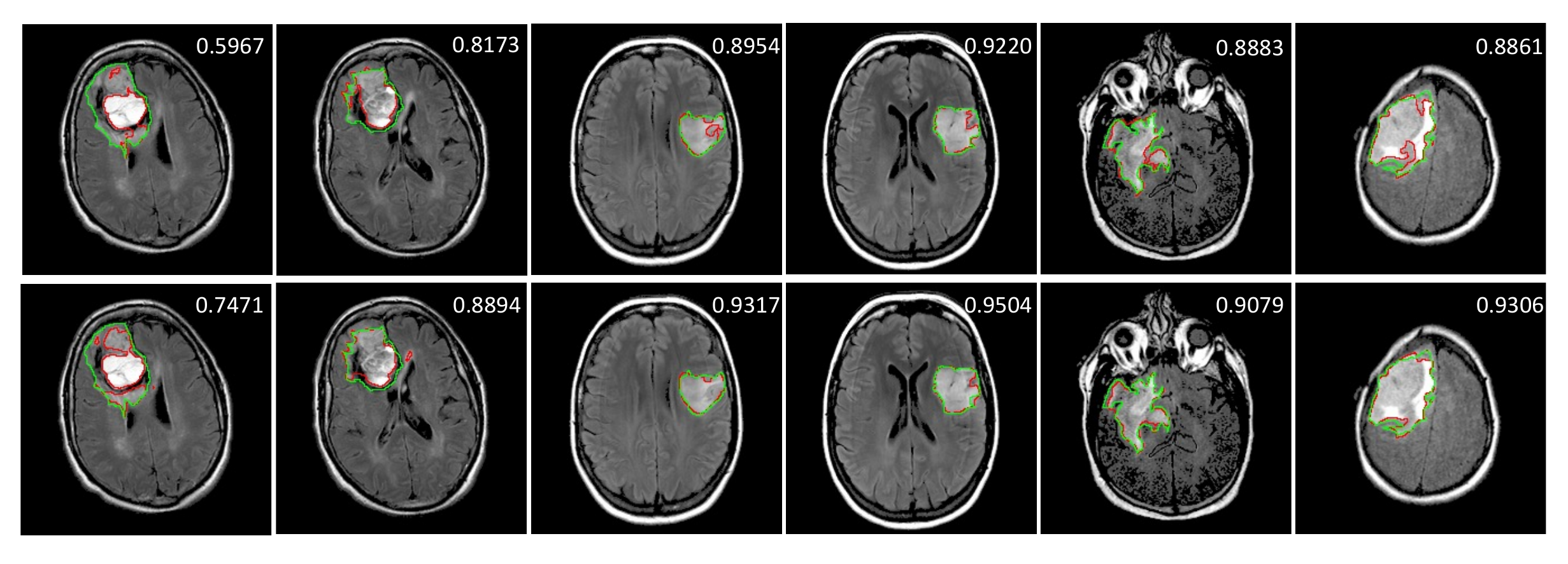}}
\vspace{-4mm}
\caption{The segmentation results by the proposed multi-task neural processes (bottom row) and the U-Net (upper row). Green outline corresponds to the ground truth and red to the segmentation output. The numbers are the DSC scores compared against ground truth. Our multi-task neural processes can predict contours closer to the ground truth ones with higher DSC scores than U-Net, indicating the advantages of exploring spatial context information for brain image segmentation.} 
\label{draw_brain}
\vspace{-4mm}
\end{figure*} 

\section{Conclusion}
In this paper, we develop multi-task neural processes, a new variant of neural processes for multi-task learning. 
We propose to explore the task relatedness in the function space by specifying the function priors in a hierarchical Bayesian inference framework.
The shared knowledge from related tasks is incorporated into the context of each individual task, which serves as the inductive bias for making predictions of this task. The hierarchical architecture allows us to design expressive data-dependent prior, enabling the model to explore the complex task relationships in multi-task learning. By leveraging the hierarchical modeling, multi-tasks neural processes are capable of capturing the shared knowledge from other tasks in a principled way by specifying the function prior. 
We evaluate multi-task neural processes on multi-task regression and classification datasets. Results demonstrate the effectiveness of multi-task neural processes in transferring useful knowledge among tasks for multi-task learning.



\newpage
\bibliography{iclr2022_conference}
\bibliographystyle{iclr2022_conference}

\newpage
\appendix
\section{Derivation of the ELBO for multi-task neural processes}
\label{app1}
We provide a derivation of ELBO of the proposed multi-task processed with hierarchical context modeling. The likelihood of multi-task learning is as follows:

\begin{equation}
\begin{aligned}
p(\{\mathbf{Y}^*_{l}\}|\{\mathbf{X}^*_l\}, \{D_l\}) & = \prod_{l=1}^{L} p(\mathbf{Y}^*_{l}|\mathbf{X}^*_l, \{D_l\}) \\
&= \prod_{l=1}^{L} \int \int p(\mathbf{Y}^*_{l}|\mathbf{X}^*_l, \bm{\psi}_l) p_{{\theta}_1}(\bm{\psi}_l|\bm{\alpha}_l, \mathbf{M}) p_{{\theta}_2}(\bm{\alpha}_l|D_l)d\bm{\alpha}_l d\bm{\psi}_l  \\
\end{aligned}
\end{equation}

Based on conditional independence assumption, we introduce the variational joint posterior distribution factorized as (\ref{vp}). By incorporating the variational posteriors in the log likelihood, we can obtain the ELBO as follows:

\begin{equation}
\begin{aligned}
\log & ~ p(\{\mathbf{Y}^*_{l}\}|\{\mathbf{X}^*_l\}, \{D_l\})  \\
& =\sum_{l=1}^{L} \log  \int \int p(\mathbf{Y}^*_{l}|\mathbf{X}^*_l, \bm{\psi}_l) p_{{\theta}_1}(\bm{\psi}_l|\bm{\alpha}_l, \mathbf{M}) p_{{\theta}_2}(\bm{\alpha}_l|D_l)d\bm{\alpha}_l d\bm{\psi}_l \\
& =\sum_{l=1}^{L} \log  \int \big\{\int p(\mathbf{Y}^*_{l}|\mathbf{X}^*_l, \bm{\psi}_l) p_{{\theta}_1}(\bm{\psi}_l|\bm{\alpha}_l, \mathbf{M}) \frac{q_{{\varphi}_1}(\bm{\psi}_l| {D}^*_{l})}{q_{{\varphi}_1}(\bm{\psi}_l| {D}^*_{l}) } d\bm{\psi}_l \big\} p_{{\theta}_2}(\bm{\alpha}_l|D_l) \frac{ q_{{\varphi}_2}(\bm{\alpha}_l| {D}^*_{l})}{ q_{{\varphi}_2}(\bm{\alpha}_l| {D}^*_{l})}  d\bm{\alpha}_l \\
& \ge \sum_{l=1}^{L} \Big\{  \mathbb{E}_{q_{{\varphi}_2}(\bm{\alpha}_l| {D}^*_{l})} \big\{ \int p(\mathbf{Y}^*_{l}|\mathbf{X}^*_l, \bm{\psi}_l) p_{{\theta}_1}(\bm{\psi}_l|\bm{\alpha}_l, \mathbf{M}) \frac{q_{{\varphi}_1}(\bm{\psi}_l| {D}^*_{l})}{q_{{\varphi}_1}(\bm{\psi}_l| {D}^*_{l}) } d\bm{\psi}_l \big\} \\
& ~~~~~~~~~~~~~~~~-\mathbb{D}_{\rm{KL}}[ q_{{\varphi}_2}(\bm{\alpha}_l| {D}^*_{l})|| p_{{\theta}_2}(\bm{\alpha}_l|D_l)] \Big\} \\
&\ge \sum_{l=1}^{L} \Big\{  \mathbb{E}_{q_{{\varphi}_2}( \bm{\alpha}_l | {D}^*_l)} \big\{\mathbb{E}_{q_{{\varphi}_1}(\bm{\psi}_l|{D}^*_l)}  [p(\mathbf{Y}^*_{l}|\mathbf{X}^*_l , \bm{\psi}_l)]
-\mathbb{D}_{\rm{KL}}[q_{{\varphi}_1}(\bm{\psi}_l| {D}^*_{l}) || p_{{\theta}_1}(\bm{\psi}_l| \bm{\alpha}_l, \mathbf{M})] \big \}  \\
& ~~~~~~~~~~~~~~~~-\mathbb{D}_{\rm{KL}}[q_{{\varphi}_2}(\bm{\alpha}_l| {D}^*_{l})||p_{{\theta}_2}(\bm{\alpha}_l|D_l) ] \Big\}.
\end{aligned}
\end{equation}

\section{Proof of Exchangability and Consistency}

We further provide theoretical proof to show that the proposed multi-task neural processes are valid stochastic processes, which completes the theory of multi-task neural processes. As the statement in \cite{garnelo2018neural}: the conditions, including (finite) exchangeability and consistency, are sufficient to define a stochastic process. 
In our multi-input multi-output setting, we observe $L$ tasks, $\{\mathbf{X}^*_l\}$ where $\mathbf{X}^*_l = \{x^*_{l,i}\}_{i=1}^{n^*_l}$ and $\{\mathbf{Y}^*_l\}$ where $\mathbf{Y}^*_l = \{y^*_{l,i}\}_{i=1}^{n^*_l}$.
$x^*_{l,i}$ denotes the $i$-th target samples from task $l$ and $y^*_{l,i}$ is its corresponding target or label. Here are the two propositions to state the exchangeability and consistency of the proposed multi-task neural processes. We model the functional posterior distribution of the stochastic process by approximating the joint predictive distribution over each target set $p(\{\mathbf{Y}^*_{l}\}|\{\mathbf{X}^*_l\},\{D_l\})$, which is conditioned on all context samples $\{D_l\}$.

\newtheorem{theorem}{Proposition}
\newenvironment{proof}{\textit{Proof.}}{\hfill$\square$}

\begin{theorem} (Exchangability)
For finite $\bm{n}=\sum_{l=1}^{L}n^*_l$, if $\bm{\pi} = \{\pi_l\}_{l=1}^{L}$ is a permutation of $\{1, ..., \bm{n}\}$ where $\pi_l$ is a permutation of the corresponding order set $\{1, ..., n^*_l\}$, then:
\begin{equation}
p(\bm{\pi}(\{\mathbf{Y}^*_{l}\})|\bm{\pi}(\{\mathbf{X}^*_l\}), \{D_l\})  = p(\{\mathbf{Y}^*_{l}\}|\{\mathbf{X}^*_l\}, \{D_l\}),
\end{equation}
where $\bm{\pi}(\{\mathbf{Y}^*_{l}\}) := (\pi_1(\mathbf{Y}^*_{1}), ..., \pi_L(\mathbf{Y}^*_{L})) = ({y}^*_{\bm{\pi}(1)}, ..., {y}^*_{\bm{\pi}(\bm{n})})$ and $\bm{\pi}(\{\mathbf{X}^*_{l}\}) := (\pi_1(\mathbf{X}^*_{1}), ..., \pi_L(\mathbf{X}^*_{L})) = ({x}^*_{\bm{\pi}(1)}, ..., {x}^*_{\bm{\pi}(\bm{n})})$.
\end{theorem}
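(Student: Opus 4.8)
The plan is to verify exchangeability directly from the factorized form of the joint predictive distribution established in the model. Recall from the derivation leading up to the statement that
\begin{equation}
p(\{\mathbf{Y}^*_{l}\}|\{\mathbf{X}^*_l\}, \{D_l\}) = \prod_{l=1}^{L} \int \int p(\mathbf{Y}^*_{l}|\mathbf{X}^*_l, \bm{\psi}_l) p_{{\theta}_1}(\bm{\psi}_l|\bm{\alpha}_l, \mathbf{M}) p_{{\theta}_2}(\bm{\alpha}_l|D_l)\, d\bm{\alpha}_l\, d\bm{\psi}_l,
\end{equation}
where $\mathbf{M}$ is built from the context sets $\{D_l\}$ only. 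The permutation $\bm{\pi}$ acts only within each target set $\mathbf{X}^*_l, \mathbf{Y}^*_l$, and leaves every context set $D_l$ (hence $\mathbf{M}$ and each prior $p_{\theta_2}(\bm{\alpha}_l|D_l)$, $p_{\theta_1}(\bm{\psi}_l|\bm{\alpha}_l,\mathbf{M})$) untouched. So the whole argument reduces to showing that, for each fixed $l$, the term $p(\mathbf{Y}^*_{l}|\mathbf{X}^*_l, \bm{\psi}_l)$ is invariant under the simultaneous permutation $\pi_l$ of the rows of $\mathbf{X}^*_l$ and $\mathbf{Y}^*_l$.

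First I would observe that the outer product $\prod_{l=1}^L$ is itself invariant under reordering of the $L$ tasks, but since $\bm{\pi}$ as defined permutes indices only within each task block (the $\pi_l$ are block-diagonal), I only need the per-task invariance; I would state this reduction explicitly. Next, for a fixed task $l$, I would invoke the factorization of the per-task likelihood over the target points: in the regression case $p(\mathbf{Y}^*_{l}|\mathbf{X}^*_l, \bm{\psi}_l) = \prod_{i=1}^{n^*_l} \mathcal{N}(\mathbf{y}^*_{l,i}|\mathbf{x}^*_{l,i}\bm{\psi}_l^\top, \sigma^2)$, and in the classification case $\log p(\mathbf{Y}^*_{l}|\mathbf{X}^*_l, \bm{\psi}_l) = \sum_{i=1}^{n^*_l} \mathbf{y}^*_{l,i}\log(f_l(\mathbf{x}^*_{l,i}))$, i.e. a product of per-point factors each depending only on the matched pair $(\mathbf{x}^*_{l,i}, \mathbf{y}^*_{l,i})$. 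A finite product (equivalently, a finite sum in the log domain) is invariant under reindexing, so $p(\pi_l(\mathbf{Y}^*_{l})|\pi_l(\mathbf{X}^*_l), \bm{\psi}_l) = p(\mathbf{Y}^*_{l}|\mathbf{X}^*_l, \bm{\psi}_l)$ for every $\bm{\psi}_l$. Since the integrand is pointwise invariant and the measure $p_{\theta_1}(\bm{\psi}_l|\bm{\alpha}_l,\mathbf{M}) p_{\theta_2}(\bm{\alpha}_l|D_l)\,d\bm{\alpha}_l d\bm{\psi}_l$ does not involve the target ordering, the integral is unchanged; then I take the product over $l$ and conclude.

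I do not expect a serious obstacle here — the result is essentially structural. The one point requiring a little care is making precise that the permutation $\bm{\pi}$ of $\{1,\dots,\bm{n}\}$ stated in the proposition is constrained to be ``block-respecting'' (each $\pi_l$ permutes only the block $\{1,\dots,n^*_l\}$), so that it never moves a target point from one task to another; I would note that this is exactly the hypothesis as written, and that it is the natural notion of exchangeability in the multi-input multi-output setting where the task identity is part of the index. A secondary minor point is that the likelihood factorizations I rely on are the ones already posited in Section~\ref{method} (the predictive likelihood is assumed i.i.d.\ across target points given $\bm{\psi}_l$), so the proof is really just unwinding that assumption; I would state this dependence openly rather than re-deriving it.
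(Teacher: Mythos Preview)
Your proposal is correct and follows essentially the same approach as the paper: expand the joint predictive as an integral over the latent variables, use that the priors depend only on the (unpermuted) context sets, and invoke the per-point factorization of the likelihood so that the finite product is invariant under reindexing within each task block. The paper's proof is a terser one-chain-of-equalities version of exactly this argument; your additional remarks about the block-respecting nature of $\bm{\pi}$ and the reliance on the conditional i.i.d.\ assumption from Section~\ref{method} only make the argument clearer.
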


\begin{proof}
\begin{equation}
\begin{aligned}
  &p(\bm{\pi}(\{\mathbf{Y}^*_{l}\})|\bm{\pi}(\{\mathbf{X}^*_l\}), \{D_l\}) \\
  & = \int \int \Big( \prod_{i=1}^{\bm{n}} p( {y}^*_{\bm{\pi}(i)} | {x}^*_{\bm{\pi}(i)}, \bm{\psi}_{1:L})\Big) p(\bm{\psi}_{1:T}| \bm{\alpha}_{1:T}, \mathbf{M}) p(\bm{\alpha}_{1:T}|\{D_l\}) d \bm{\psi}_{1:L} d \bm{\alpha}_{1:L} \\
  & = \int \int \Big( \prod_{l=1}^{L} \prod_{j=1}^{n^*_l} 
  p(y^*_{\pi_1(j)}|x^*_{\pi_1(j)}, \bm{\psi}_l) \Big) \Big( \prod_{l=1}^{L} p(\bm{\psi}_l| \bm{\alpha}_l, \mathbf{M}) \Big) \Big( \prod_{l=1}^{L} p(\bm{\alpha}_l|D_l) \Big) d \bm{\psi}_{1:L} d \bm{\alpha}_{1:L} \\
  & = \int \int \Big( \prod_{l=1}^{L} \prod_{j=1}^{n^*_l} 
  p(y^*_{j}|x^*_{j}, \bm{\psi}_l) \Big) \Big( \prod_{l=1}^{L} p(\bm{\psi}_l| \bm{\alpha}_l, \mathbf{M}) \Big) \Big( \prod_{l=1}^{L} p(\bm{\alpha}_l|D_l) \Big) d \bm{\psi}_{1:L} d \bm{\alpha}_{1:L} \\
  & = p(\{\mathbf{Y}^*_{l}\}|\{\mathbf{X}^*_l\}, \mathbf{M}) \\
\end{aligned}
\end{equation}
\end{proof}

\begin{theorem}(Consistency)
Given $\bm{m} = \sum_l^{L} m_{l=1}$, if $ 1\le \bm{m} \le \bm{n}$ or for each task $ 1\le m_l \le n^*_l$, then:
\begin{equation}
\int p(\{\mathbf{Y}^*_{l}\}|\{\mathbf{X}^*_l\}, \{D_l\}) d (\{\mathbf{Y}^*_{l}\})_{\bm{m}+1:\bm{n}}  = p((\{\mathbf{Y}^*_{l}\})_{1:\bm{m}}|(\{\mathbf{X}^*_l\})_{1:\bm{m}}, \{D_l\}),
\end{equation}
where $(\{\mathbf{Y}^*_{l}\})_{1:\bm{m}} = ((\mathbf{Y}^*_{1})_{1:m_1}, ..., (\mathbf{Y}^*_{L})_{1:m_L}) = ({y}^*_{1}, ...,{y}^*_{\bm{m}} )$ and  $(\{\mathbf{X}^*_{l}\})_{1:\bm{m}} = ((\mathbf{X}^*_{1})_{1:m_1}, ..., (\mathbf{X}^*_{L})_{1:m_L}) = ({x}^*_{1}, ...,{x}^*_{\bm{m}})$.
\end{theorem}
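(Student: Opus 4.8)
The plan is to mirror the structure of the exchangeability proof: first expand the left-hand side into the integral representation of the multi-task neural process model, then push the marginalizing integral over the dropped targets $(\{\mathbf{Y}^*_l\})_{\bm{m}+1:\bm{n}}$ through the latent integrals (over $\bm{\psi}_{1:L}$ and $\bm{\alpha}_{1:L}$) using Fubini, and finally use the fact that the predictive likelihood factorizes over target points to collapse the integral over the discarded outputs to $1$.

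Concretely, first I would write
\begin{equation}
\begin{aligned}
&p(\{\mathbf{Y}^*_{l}\}|\{\mathbf{X}^*_l\}, \{D_l\})\\
&= \int\!\!\int \Big(\prod_{l=1}^{L}\prod_{i=1}^{n^*_l} p(y^*_{l,i}|x^*_{l,i},\bm{\psi}_l)\Big)\Big(\prod_{l=1}^L p_{\theta_1}(\bm{\psi}_l|\bm{\alpha}_l,\mathbf{M})\Big)\Big(\prod_{l=1}^L p_{\theta_2}(\bm{\alpha}_l|D_l)\Big)\,d\bm{\psi}_{1:L}\,d\bm{\alpha}_{1:L},
\end{aligned}
\end{equation}
which follows from the model definition in~(\ref{model of sp})--(\ref{hp}) together with the conditional-independence assumption that, given $\bm{\psi}_l$, the target outputs of task $l$ are independent across $i$ (this is exactly what makes $p(\mathbf{Y}^*_l|\mathbf{X}^*_l,\bm{\psi}_l)=\prod_i p(y^*_{l,i}|x^*_{l,i},\bm{\psi}_l)$, used already in the exchangeability proof). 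Second, I would integrate both sides against $d(\{\mathbf{Y}^*_l\})_{\bm{m}+1:\bm{n}}$ and interchange the order of integration so that the $\mathbf{Y}^*$-integral sits innermost, next to the likelihood factors; the priors $p_{\theta_1}$ and $p_{\theta_2}$ do not depend on any $\mathbf{Y}^*$ and come out. Third, since the likelihood is a product over target points, the integral over the dropped points splits as $\prod_{l}\prod_{i=m_l+1}^{n^*_l}\int p(y^*_{l,i}|x^*_{l,i},\bm{\psi}_l)\,dy^*_{l,i}=1$, because each $p(\cdot|x^*_{l,i},\bm{\psi}_l)$ is a normalized density in its output argument (a Gaussian $\mathcal{N}(\cdot|f_l(x^*_{l,i}),\sigma^2)$ for regression, a categorical for classification). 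What remains is precisely the integral representation of $p((\{\mathbf{Y}^*_l\})_{1:\bm{m}}|(\{\mathbf{X}^*_l\})_{1:\bm{m}},\{D_l\})$, which gives the claim.

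The main obstacle is less an analytic difficulty than a bookkeeping/justification one: I need to make sure the Fubini interchange is legitimate (all integrands are nonnegative, so Tonelli applies and no integrability side-condition is needed) and, more importantly, to be careful about what "dropping points $\bm{m}+1$ through $\bm{n}$'' means across tasks. The statement phrases it both globally ($1\le\bm{m}\le\bm{n}$) and per-task ($1\le m_l\le n^*_l$); the clean version is the per-task one, where from each task $l$ we retain the first $m_l$ targets and marginalize out the last $n^*_l-m_l$, and then the product-over-$(l,i)$ structure makes the collapse immediate. I would state the per-task indexing explicitly at the start of the proof to avoid ambiguity, and note that since the likelihood factorizes completely over individual target points, the order in which points are marginalized is irrelevant, so the global and per-task formulations agree.
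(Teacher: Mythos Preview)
Your proposal is correct and follows essentially the same route as the paper's own proof: expand the joint predictive distribution via the latent-variable integral, interchange the order of integration, use the factorized likelihood to collapse the integral over the dropped outputs to $1$, and recognize the remainder as the marginal predictive distribution. Your added remarks on Tonelli and on reconciling the global versus per-task indexing are more careful than what the paper spells out, but the underlying argument is identical.
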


\begin{proof}
\begin{equation}
\begin{aligned}
& \int p(\{\mathbf{Y}^*_{l}\}|\{\mathbf{X}^*_l\}, \{D_l\}) d (\{\mathbf{Y}^*_{l}\})_{\bm{m}+1:\bm{n}} \\
& = \int \int \int \Big( \prod_{i=1}^{\bm{n}} p( {y}^*_{i} | {x}^*_{i}, \bm{\psi}_{1:L})\Big) \Big( \prod_{l=1}^{L} p(\bm{\psi}_l| \bm{\alpha}_l, \mathbf{M}) \Big) \Big( \prod_{l=1}^{L} p(\bm{\alpha}_l|D_l) \Big) d \bm{\psi}_{1:L} d \bm{\alpha}_{1:L} d (\{\mathbf{Y}_{l}\})_{\bm{m}+1:\bm{n}} \\
& = \int \int  \Big( \prod_{i=1}^{\bm{m}} p( {y}^*_{i} | {x}^*_{i}, \bm{\psi}_{1:L})\Big) 
\Big( \int \prod_{i=\bm{m}+1}^{\bm{n}} p( {y}^*_{i} | {x}^*_{i}, \bm{\psi}_{1:L}) d(\{\mathbf{Y}^*_{l}\})_{\bm{m}+1:\bm{n}} \Big) \\
& ~~~~~~~~~~~~~~~~~~~~~~\Big( \prod_{l=1}^{L} p(\bm{\psi}_l| \bm{\alpha}_l, \mathbf{M}) \Big) \Big( \prod_{l=1}^{L} p(\bm{\alpha}_l|D_l) \Big) d \bm{\psi}_{1:L} d \bm{\alpha}_{1:L}  \\
& = \int \int  \Big( \prod_{i=1}^{\bm{m}} p( {y}^*_{i} | {x}^*_{i}, \bm{\psi}_{1:L})\Big) \Big( \prod_{l=1}^{L} p(\bm{\psi}_l| \bm{\alpha}_l, \mathbf{M}) \Big) \Big( \prod_{l=1}^{L} p(\bm{\alpha}_l|D_l) \Big) d \bm{\psi}_{1:L} d \bm{\alpha}_{1:L} \\
& = p((\{\mathbf{Y}^*_{l}\})_{1:\bm{m}}|(\{\mathbf{X}^*_l\})_{1:\bm{m}}, \mathbf{M})
\end{aligned}
\end{equation}
\end{proof}

\section{More Experimental Details}
\label{app2}

Details of iteration numbers and batch sizes for different benchmarks are provided in Table~\ref{iteration}. In each batch, the number of training samples from each task and category is identical. We train all models and parameters by the Adam optimizer~\cite{kingma2014adam} using an NVIDIA Tesla V100 GPU. The learning rate is initially set as $1e-4$ and decreases with a factor of $0.5$ every $3,000$ iterations. 
The network architectures of the proposed multi-task neural processes for multi-task classification are provided as follows.

\begin{table}[h]
\small
\begin{center}
    \caption{The iteration numbers and batch sizes on different datasets, where $C$ and $L$ denotes the number of classes and tasks in the specific dataset, respectively.}
	\centering
	\begin{tabular}{lrr}
      \toprule
    	\textbf{Dataset} & \textbf{Iteration} & \textbf{Batch size} \\
        \midrule
		\texttt{Office-Home} & $15,000$ & $8*C*L$ \\
		\texttt{Office-Caltech} & $15,000$ & $8*C*L$\\
		\texttt{ImageCLEF} & $15,000$ & $8*C*L$ \\
      \bottomrule
	\end{tabular}
	\label{iteration}
	\end{center}
\end{table}
\vspace{+10mm}
\begin{table}[ht]
\small
\begin{center}
    \caption{The architecture of inference networks $\varphi_1(\cdot)$ for latent variable $\bm{\psi}$ of multi-task neural processes.}
	\centering
	\begin{tabular}{cl}
      \toprule
    	\textbf{Output size} & \textbf{Layers} \\
        \midrule
		$4096$ & Input feature \\
		$4096$ & Dropout (p=$0.7$)\\
		$4096$ & Fully connected, ELU\\
		$4096$ & Fully connected, ELU\\
		$4096$ & Reparameterization  to $\mu_{\psi}$, $\sigma^{2}_{\psi}$\\
	 \bottomrule
	\end{tabular}
	\label{inference_networks_1}
	\end{center}
\end{table}
\vspace{+10mm}
\begin{table}[!]
\small
\begin{center}
    \caption{The architecture of inference networks $\varphi_2(\cdot)$ for the latent variable $\bm{\alpha}$ of multi-task neural processes.}
	\centering
	\begin{tabular}{cl}
      \toprule
    	\textbf{Output size} & \textbf{Layers} \\
        \midrule
		$4096$ & Input feature \\
		$4096$ & Dropout (p=$0.7$)\\
		$2048$ & Fully connected, ELU\\
		$2048$ & Fully connected, ELU\\
		$2048$ & Reparameterization to $\mu_{\alpha}$, $\sigma^{2}_{\alpha}$\\
	 \bottomrule
	\end{tabular}
	\label{inference_networks_2}
	\end{center}
\end{table}
\vspace{+10mm}

\begin{table}[!]
\small
\begin{center}
    \caption{The architecture of the neural network $h(\cdot)$ of multi-task neural processes.}
    \vspace{+3mm}
	\centering
	\begin{tabular}{cl}
      \toprule
    	\textbf{Output size} & \textbf{Layers} \\
        \midrule
		$2048$ & Input feature  \\
		$1024$ & Fully connected, ELU \\
		$512$  & Fully connected, ELU \\
		$L$    & Fully connected \\
		$L$    & Normalization \\
		$4096$ & Multiply with the global variable \\
	 \bottomrule
	\end{tabular}
	\label{networks_3}
	\end{center}
\end{table}

The architecture of the inference network $\varphi_1$ is provided in Table~\ref{inference_networks_1}. The architecture of the inference network $\varphi_2$ is provided in Table~\ref{inference_networks_2}. 
We note that the inference network $\theta_1$ and $\theta_2$ share the same architectures with $\varphi_1$ and $\varphi_2$, respectively. 
The architecture of the neural network $h$ is provided in Table~\ref{networks_3}. The network is needed because it provides a data-driven way for the model to incorporate the task-specific latent variable $\alpha_l$ and the global variable $M$, which are usually defined in different feature spaces.
During inference, we apply the reparameterization trick to generate the samples for the latent variables~\citep{kingma2013auto}.

\begin{table*}[h]

\end{table*}

\section{More Experimental Results}
\label{app3}

\subsection{Multi-task regression and classification with the 20\% split }

Further, we provide experiments results on the three multi-task classification datasets with $20\%$ training samples in Table~\ref{appendix_officehome}, ~\ref{appendix_office} and ~\ref{appendix_clef}. The proposed multi-task neural processes consistently achieve the best performance on all three benchmarks. 
\begin{table*}[t]
\begin{minipage}[t]{\linewidth}
\caption{Performance comparison of different methods on the \texttt{Office-Home} dataset with $20\%$ training samples.}
\vspace{-2mm}
\label{appendix_officehome}
\centering
\begin{adjustbox}{max width =\textwidth}
\begin{tabular}{l|cccc>{\columncolor[gray]{.9}}c}
\toprule 
 Methods & A    & C    & P    & R    & Avg.  \\ 
\midrule
Single task learning                    
&54.6{\scriptsize$\pm$0.4}  &50.6{\scriptsize$\pm$0.4}   &81.3{\scriptsize$\pm$0.2}   &73.1{\scriptsize$\pm$0.3}  &64.9{\scriptsize$\pm$0.1} \\
\cite{bakker2003task} 
&61.3{\scriptsize$\pm$0.2}	&56.5{\scriptsize$\pm$0.2}	&81.7{\scriptsize$\pm$0.3}	&75.4{\scriptsize$\pm$0.2}	&68.7{\scriptsize$\pm$0.2} \\
\cite{long2017learning}   
&65.1{\scriptsize$\pm$0.3}  &46.7{\scriptsize$\pm$0.2}   &79.9{\scriptsize$\pm$0.3}   &76.6{\scriptsize$\pm$0.3}   &67.1{\scriptsize$\pm$0.1}\\
\cite{kendall2018multi} 
&59.5{\scriptsize$\pm$0.3}	&53.8{\scriptsize$\pm$0.3}	&80.1{\scriptsize$\pm$0.1}	&73.6{\scriptsize$\pm$0.4}	&66.8{\scriptsize$\pm$0.2}
\\
\cite{qian2020multi}
&58.3{\scriptsize$\pm$0.2}	&53.5{\scriptsize$\pm$0.3}	&79.8{\scriptsize$\pm$0.2}	&73.1{\scriptsize$\pm$0.3}	&66.2{\scriptsize$\pm$0.1}
\\
Multi-task neural processes
&64.2{\scriptsize$\pm$0.1} &55.7{\scriptsize$\pm$0.3} &82.6{\scriptsize$\pm$0.2} &77.2{\scriptsize$\pm$0.3}			&\textbf{69.9{\scriptsize$\pm$0.3}}
\\
\bottomrule 
\end{tabular}
\end{adjustbox}
\end{minipage}

\begin{minipage}[t]{\linewidth}
\caption{Performance comparison of different methods on the \texttt{Office-Caltech} dataset with $20\%$ training samples.
}
\label{appendix_office}
\centering
\begin{adjustbox}{max width=\linewidth}
\begin{tabular}{l|cccc>{\columncolor[gray]{.9}}c}
\toprule
{Methods} & A & W & D & C & Avg.\\ 
\midrule
Single task learning
& 94.9{\scriptsize$\pm$0.2} & 92.8{\scriptsize$\pm$0.4} & 95.2{\scriptsize$\pm$0.6}  & 86.7{\scriptsize$\pm$0.6} & 92.4{\scriptsize$\pm$0.3} \\
\cite{bakker2003task} 
&95.2{\scriptsize$\pm$0.2}	&94.4{\scriptsize$\pm$0.4}	 &99.5{\scriptsize$\pm$0.3}	&91.3{\scriptsize$\pm$0.1}	&95.1{\scriptsize$\pm$0.1} \\
\cite{long2017learning}
&95.5{\scriptsize$\pm$0.3} &94.9{\scriptsize$\pm$0.1} &99.2{\scriptsize$\pm$0.3} &91.0{\scriptsize$\pm$0.4} &95.1{\scriptsize$\pm$0.1} \\

\cite{kendall2018multi} 
&95.4{\scriptsize$\pm$0.7}	&93.2{\scriptsize$\pm$0.4}	&99.2{\scriptsize$\pm$0.4}	&91.2{\scriptsize$\pm$0.3}	&94.7{\scriptsize$\pm$0.3}
\\
\cite{qian2020multi} 
&95.7{\scriptsize$\pm$0.4}	&94.1{\scriptsize$\pm$0.2}	&99.2{\scriptsize$\pm$0.5}	&91.1{\scriptsize$\pm$0.4}	&95.0{\scriptsize$\pm$0.2}
\\
Multi-task neural processes
&94.9{\scriptsize$\pm$0.3}	&96.6{\scriptsize$\pm$0.2}	&99.2{\scriptsize$\pm$0.4}	&92.3{\scriptsize$\pm$0.2}	&\textbf{95.7{\scriptsize$\pm$0.1}}\\
\bottomrule
\end{tabular}
\end{adjustbox}
\end{minipage}

\begin{minipage}[t]{\linewidth}
\caption{Performance comparison of different methods on the \texttt{ImageCLEF} dataset with $20\%$ training samples.
}
\label{appendix_clef}
\centering
\begin{adjustbox}{max width=\linewidth}
\begin{tabular}{l|cccc>{\columncolor[gray]{.9}}c}
\toprule
{Methods} & C & I & P & B &Avg.\\ 
\midrule
Single task learning
& 92.9{\scriptsize$\pm$0.6} & 84.6{\scriptsize$\pm$0.3} & 72.5{\scriptsize$\pm$0.4} & 54.6{\scriptsize$\pm$0.6} & 76.2{\scriptsize$\pm$0.3}\\
\cite{bakker2003task} 
&94.4{\scriptsize$\pm$0.5}	&90.6{\scriptsize$\pm$0.4}	&74.2{\scriptsize$\pm$0.4}	&57.9{\scriptsize$\pm$0.3}	&79.3{\scriptsize$\pm$0.4}\\
\cite{long2017learning}
&94.4{\scriptsize$\pm$0.4} &89.2{\scriptsize$\pm$0.5} &75.8{\scriptsize$\pm$0.5} &59.4{\scriptsize$\pm$0.3}  &79.7{\scriptsize$\pm$0.3}\\
\cite{kendall2018multi} 
&93.3{\scriptsize$\pm$0.4}	&91.0{\scriptsize$\pm$0.2}	&75.6{\scriptsize$\pm$0.2}	&56.9{\scriptsize$\pm$0.4}	&79.2{\scriptsize$\pm$0.3}\\
\cite{qian2020multi} 
&93.1{\scriptsize$\pm$0.3}	&92.1{\scriptsize$\pm$0.5}	&74.4{\scriptsize$\pm$0.7}	&55.8{\scriptsize$\pm$0.6}	&78.9{\scriptsize$\pm$0.5} \\
Multi-task neural processes
&92.1{\scriptsize$\pm$0.4}	&91.5{\scriptsize$\pm$0.5}	&79.2{\scriptsize$\pm$0.4}	&60.6{\scriptsize$\pm$0.3}	&\textbf{80.8{\scriptsize$\pm$0.1}}\\
\bottomrule
\end{tabular}
\end{adjustbox}
\end{minipage}
\end{table*}

\subsection{Multi-task regression with less data}
To show the advantages of our model, we compare them on the setting of less data with the $0.05\%$ split. In this case, there are only 20 samples per task during training. As shown in the Table ~\ref{rotated_mnist_0.05}, the improvement of our method becomes larger. 

\begin{table}[h]
\centering
\caption{Performance (Average NMSE) on Rotated MNIST(0.05\% split).}
\begin{adjustbox}{max width=\linewidth}
\begin{tabular}{c|ccccccccccc}
\toprule
Methods &0 &1 &2 &3 &4 &5 &6 &7 &8 &9 &\cellcolor[HTML]{E6E6E6}Avg. NMSE \\
\midrule
NPs 
&0.287{\scriptsize$\pm$0.018} 	&0.149{\scriptsize$\pm$0.010} 	
&0.136{\scriptsize$\pm$0.007} 	&0.178{\scriptsize$\pm$0.013} 	
&0.135{\scriptsize$\pm$0.003} 	&0.142{\scriptsize$\pm$0.007} 	
&0.214{\scriptsize$\pm$0.005} 	&0.198{\scriptsize$\pm$0.009} 	
&0.139{\scriptsize$\pm$0.008} 	&0.148{\scriptsize$\pm$0.004} 	
&\cellcolor[HTML]{E6E6E6} 0.173{\scriptsize$\pm$0.003}  \\
MTNPs 
&0.185{\scriptsize$\pm$0.018} 	&0.150{\scriptsize$\pm$0.013}	
&0.169{\scriptsize$\pm$0.008} 	&0.132{\scriptsize$\pm$0.015} 	
&0.173{\scriptsize$\pm$0.004} 	&0.138{\scriptsize$\pm$0.010} 	
&0.196{\scriptsize$\pm$0.004} 	&0.152{\scriptsize$\pm$0.010} 	
&0.131{\scriptsize$\pm$0.007} 	&0.165{\scriptsize$\pm$0.005} 	
&\cellcolor[HTML]{E6E6E6}\textbf{0.159{\scriptsize$\pm$0.002}} \\
\bottomrule
\end{tabular}
\end{adjustbox}
\label{rotated_mnist_0.05}
\end{table}

\subsection{Stability of our method}
The computational advantage of our method can be illustrated by the training loss as function of iteration on \textit{Office-Home}. As shown in Fig.~\ref{convergence}, our MTNPs converges more stable than NPs under $5\%$, $10\%$ and $20\%$ train-test splits.
Moreover, we have added a new experiment to investigate the training stability by introducing the noise to the input data.

We apply the the fast gradient sign method \citep{goodfellow2014explaining} to generate the noise. The results are given in Table~\ref{stable}, where $\eta$ denotes the noise level. We observe that our MTNPs show better stability than NPs at different noise levels.

\begin{figure*}[!]
\centering 
\centerline{\includegraphics[width=0.5\textwidth]{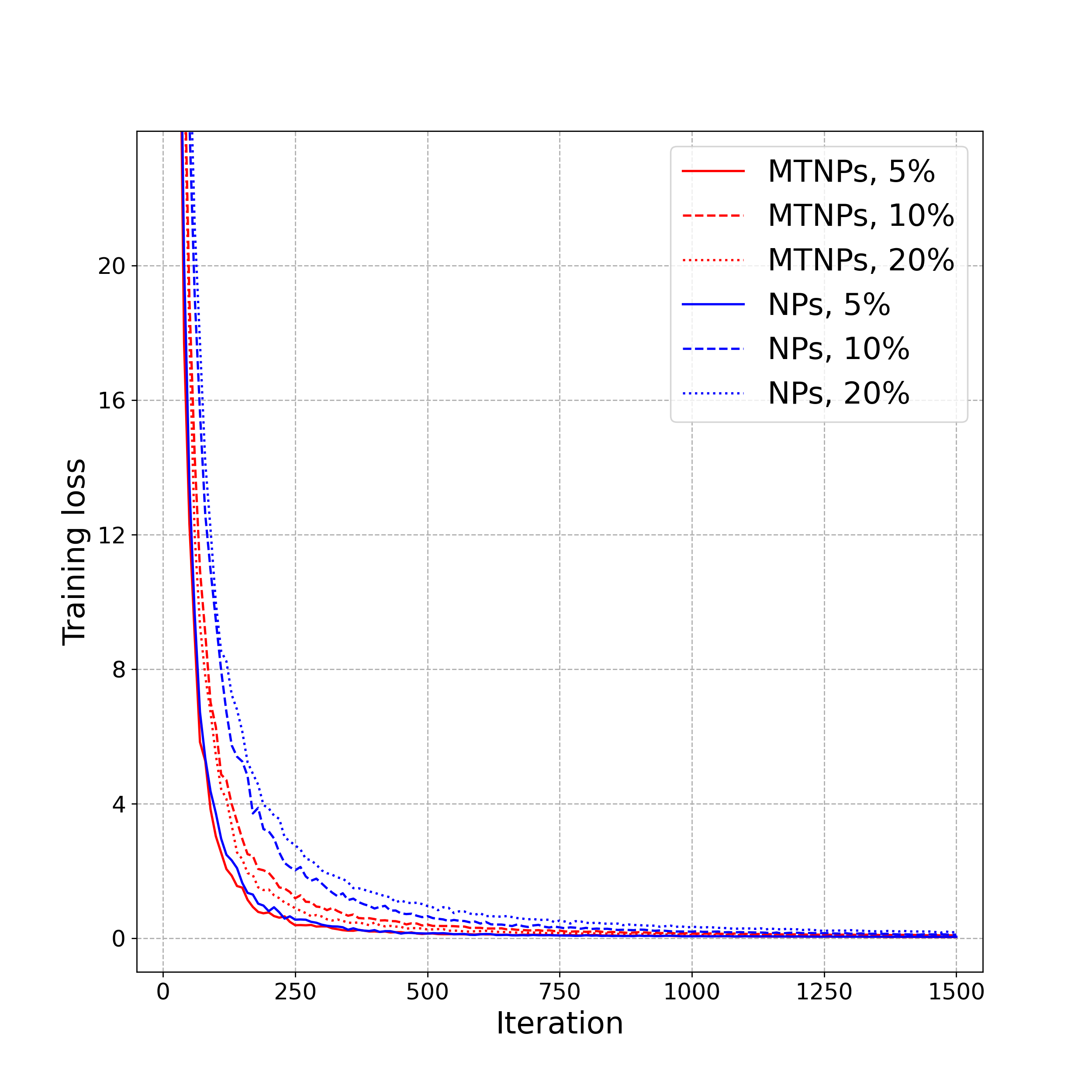}}
\caption{Illustration of training loss with iterations on \texttt{Office-Home}. MTNPs converges more stable than NPs under $5\%$, $10\%$ and $20\%$ train-test splits.}
\label{convergence}
\vspace{-5mm}
\end{figure*}

\begin{table}[h]
\centering
\small
\caption{The performance under different noise level on the \texttt{Office} dataset with $5\%$ split.}
\begin{adjustbox}{max width=\linewidth}
\begin{tabular}{ccccccc}
\toprule
$\eta$ &0.0  &0.2  &0.4  &0.6   &0.8   &1.0   \\
\midrule
NPs              &92.6	&89.5	&86.6	&82.7	&77.7	&75.1 \\
MTNPs    &\textbf{94.5}	&\textbf{91.2}	&\textbf{87.9}	&\textbf{84.3}	&\textbf{82.5}	&\textbf{81.0} \\
\bottomrule
\end{tabular}
\end{adjustbox}
\vspace{-4mm}
\label{stable}
\end{table}

\subsection{Sensitivity of the number of sampling}
In practice, we set $N_f$ and $N_a$ to be 10 and 5, which offer a good balance between performance and efficiency. We determine them by grid search as shown in Table~\ref{Nf} and~\ref{Na}.

\begin{table}[h]
\centering
\small
\caption{
Sensitivity of $N_f$ (and $N_a$ = 5) on Office-Home with the 5\% split.}
\begin{adjustbox}{max width=\linewidth}
\begin{tabular}{cccccc}
\toprule
$N_f$ & 1    & 5  & \cellcolor[HTML]{E6E6E6}10   & 20   & 30   \\
\midrule
Avg. &59.8{\scriptsize$\pm$0.1}   &59.9{\scriptsize$\pm$0.1}   &\cellcolor[HTML]{E6E6E6}\textbf{60.0{\scriptsize$\pm$0.1}}    &59.9{\scriptsize$\pm$0.1}  &\textbf{60.0{\scriptsize$\pm$0.0}}\\
\bottomrule
\end{tabular}
\end{adjustbox}
\label{Nf}
\end{table}

\begin{table}[h]
\centering
\small
\caption{Sensitivity of $N_a$ (and $N_f=10$) on Office-Home with the 5\% split.}
\begin{adjustbox}{max width=\linewidth}
\begin{tabular}{cccccc}
\toprule
$N_a$ &  1    &\cellcolor[HTML]{E6E6E6} 5  & 10   & 20   & 30   \\
\midrule
Avg. &59.8{\scriptsize$\pm$0.1}   &\cellcolor[HTML]{E6E6E6}\textbf{60.0{\scriptsize$\pm$0.1}}   &59.6{\scriptsize$\pm$0.1}    &59.9{\scriptsize$\pm$0.1}  &59.8{\scriptsize$\pm$0.1}\\
\bottomrule
\end{tabular}
\end{adjustbox}
\label{Na}
\end{table}

\end{document}